\newcommand{\arxiv}{true}
\newcommand{\Z}{{\mathbb{Z}}}
\newcommand{\R}{{\mathbb{R}}}
\newcommand{\F}{{\mathcal{F}}}
\newcommand{\G}{{\mathcal{G}}}
\newcommand{\V}{{\mathcal{V}}}
\DeclareMathOperator{\GP}{\mathcal{GP}}
\DeclareMathOperator{\sol}{sol}
\newcommand{\xleftarrowdbl}[2][]{%
	\leftarrow\mathrel{\mkern-14mu}\xleftarrow[#1]{#2}
}
\newtheorem{theorem}{Theorem}[section]
\newtheorem{lemma}[theorem]{Lemma}
\newtheorem{proposition}[theorem]{Proposition}
\newtheorem{corollary}[theorem]{Corollary}
\theoremstyle{definition}
\newtheorem{example}[theorem]{Example}
\newtheorem{remark}[theorem]{Remark}
\title{Linearly Constrained Gaussian Processes with Boundary Conditions}
\author{Markus Lange-Hegermann\\
	Department of Electrical Engineering and Computer Science\\
	OWL University of Applied Sciences and Arts\\
	\texttt{markus.lange-hegermann@th-owl.de }}
\date{}
\begin{document}

%

%

\twocolumn

\maketitle

\begin{abstract}
	One goal in Bayesian machine learning is to encode prior knowledge into prior distributions, to model data efficiently.
	We consider prior knowledge from  systems of linear partial differential equations together with their boundary conditions.
	We construct multi-output Gaussian process priors with realizations in the solution set of such systems, in particular only such solutions can be represented by Gaussian process regression.
	The construction is fully algorithmic via Gr\"obner bases and it does not employ any approximation.
	It builds these priors combining two parametrizations via a pullback: the first parametrizes the solutions for the system of differential equations and the second parametrizes all functions adhering to the boundary conditions.
\end{abstract}

\section{Introduction}

Gaussian processes \citep{RW} are very data efficient.
Hence, they are the prime regression technique for small datasets and applied when data is rare or expensive to produce.
Applications range from robotics \citep{lima-lars-18}, biology \citep{LawrencePNAS2015}, global optimization \citep{OGR09}, anomaly detection \cite{Berns20}, hyperparameter search \cite{AutoWEKA}
to engineering \citep{TLRB_gp}.
A Gaussian process can be viewed as a suitable probability distribution on a set of functions, which we can condition on observations using Bayes' rule.
This avoids overfitting.
Due to the self-conjugacy of the Gaussian distribution, the posterior is again Gaussian.
The mean function of the posterior is used for regression and the variance quantifies uncertainty.
For a suitable covariance function of the prior, the posterior can approximate any behavior present in data, even in noisy or unstructured data.

Any prior knowledge about the regression problem should be incorporated into the prior.
Then, the preciously rare measurement data can be used to refine and improve on this prior knowledge, instead of needing to relearn it.
The prior knowledge is usually encoded into the covariance structure of the Gaussian process, cf.\ \citep[\S4]{RW} or \citep{duvenaud-thesis-2014}.
Gaussian process regression differs in philosophy from deep learning, where the latter thrives on extracting knowledge from a lot of data but struggles with one-shot learning and encoding prior knowledge, which is usually done via pretraining on similar data.

Prior knowledge is often given by physical laws.
In particular, it is important to include linear differential equations into machine learning frameworks.
Gaussian processes that adhere to such a set of linear differential equations were constructed several times in the literature \citep{NoisyLinearOperatorEquationsGP,MacedoCastro2008,sarkka2011linear,scheuerer2012covariance,Wahlstrom13modelingmagnetic,MagneticFieldGP,LinearlyConstrainedGP,raissi2017machine,raissi2018hidden,jidling2018probabilistic}.
All realizations and the mean function of the posterior strictly 
\footnote{%
	For notational simplicity, we refrain from using the phrase ``almost surely'' in this paper, e.g.\ by assuming separability.
}
 satisfy these physical laws.
Such Gaussian processes exist if and only if the set of linear differential equations describes a controllable system\footnote{%
	Controllable systems have ``big'' sets of solutions, even after adding boundary conditions.
	As there is no unique solution, one does regression or control in those systems, instead of (numerically) solving them as is usually done for systems with ``small'' solution sets.
}.
Their construction can be completely automatized by symbolic algorithms from algebraic system theory, which again strongly build on Gr\"obner bases \citep{LH_AlgorithmicLinearlyConstrainedGaussianProcesses}.

While the above approaches are exact, there are also approximate approaches to include partial differential equations in Gaussian process and more generally machine learning.
For example, various forms of posterior regularization \citep{ganchev2010posterior,SongKernel2016,yuan2020macroscopic} can flexibly consider any differential equation.
The paper \cite{numericalGPs} constructed Gaussian processes on numerical difference approximation schemes of differential equations.
Gaussian processes have been used to estimate conservation laws (\citealp{LDEGaussianProcesses,FunctionalGPsPDE,nguyen2016gaussian}).
In \citep{yang2018physics}, a Gaussian process prior is approximated from an MCMC scheme build on numerical simulations.

Usually, differential equations come with boundary conditions.
Hence, a description of boundary conditions in a machine learning framework is highly desirable.
In the special case of ODEs, boundary conditions behave as data points, hence one only needs finite-dimensional data to specify them.
These data points can be trivially included into a Gaussian process 
	\citep{kocijan2004gaussian,CalderheadAccelerating2009,barber2014gaussian,GOODE} and other machine learning methods \citep{chen2018neural,Raissi2018Deep,sarkka2019applied}.
This paper claims no originality for ODEs.

For boundary conditions of PDEs, one would need functions (specified by infinite dimensional data) to describe the boundary conditions.
Solving this problem \emph{exactly} and without any approximation is the main contribution of this paper: the construction of (non-stationary) Gaussian process priors combining differential equations and general boundary conditions.
This construction is again based on symbolically building parametrizations using Gr\"obner bases, as in \citep{LH_AlgorithmicLinearlyConstrainedGaussianProcesses}.

More precisely, given a system of linear differential equations with rational (or, as a special case, constant) coefficient of a controllable system defined by an operator matrix and boundary conditions defined by the zero set of a polynomial ideal, we construct a Gaussian process prior of the corresponding set of smooth solutions.
In particular, a regression model constructed from this Gaussian process prior has \emph{only} solutions of the system as realizations.
We need no approximations.

Using the results of this paper, one can add information to Gaussian processes by
\begin{enumerate}[label=(\roman*)]
	\item conditioning on data points (Bayes' rule), 
	\item restricting to solutions of linear operator matrices \cite{LH_AlgorithmicLinearlyConstrainedGaussianProcesses}, and
	\item adding boundary conditions (this paper).
\end{enumerate}
Since these constructions are compatible, we can combine \emph{strict, global information} from equations and boundary conditions with \emph{noisy, local information} from observations.
This paper is an example in how symbolic techniques can help data driven machine learning.
All results are mathematically proven in the appendices and the algorithms are demonstrated on toy examples with only one or two data points, an extreme form of one-shot learning.
The code for reproduction of the results is given in \ifthenelse{\equal{\arxiv}{true}}{Appendix~\ref{appendix_code}}{the appendix} and the (very small amount of) data is completely given in the text of this paper. 

The novelty in this paper does not lie in either of its techniques, which are well-known either in algebraic system theory or machine learning.
Rather, this paper combines these techniques and thereby presents a novel framework to deal with learning from data in the presence of linear controllable \emph{partial} differential equations and boundary conditions.
We found it hard to compare to the state of the art, as there currently is no comparable technique, except the superficially similar paper \citep{NoisyLinearOperatorEquationsGP} discussed in Remark~\ref{remark_graepl} and a plethora of machine learning techniques designed for \emph{ordinary} differential equations.
The only exception is \cite{gulian2020gaussian}, which considers inhomogeneous linear differential equations with boundary conditions using the spectral decomposition of a covariance function \citep{solin2019know}, where the right hand side is specified approximately by data.
These approaches allow to approximately specify a prior for the solution of the differential equation, instead of specifying the prior for the parametrizing function as in this paper.

We recall Gaussian processes and their connection to linear operators in Section~\ref{section_GP} and summarize the construction of Gaussian processes adhering to linear operators in Section~\ref{section_GPs_solutions}.
Describing boundary conditions as parametrizations is surprisingly simple (Section~\ref{section_boundary}).
Theorem~\ref{theorem_combining_parametrizations} describes the core construction of this paper, which allows to check whether and how two parametrizations are combinable.
In Section~\ref{section_inhomogeneous} we construct boundary conditions with non-zero right hand sides using the fundamental theorem on homomorphisms.

\section{Operators and Gaussian Processes}\label{section_GP}

A \emph{Gaussian process} $g=\GP(\mu,k)$ is a probability distribution on the evaluations of functions $\R^d\to\R^\ell$ such that function values $g(x_1),\ldots,g(x_n)$
are jointly Gaussian.
It is specified by a \emph{mean function}
$
  \mu:\R^d\to\R^\ell:x\mapsto E(g(x))
$
and a positive semidefinite \emph{covariance function}
\begin{align*}
    k: \R^d\times\R^d &\to \R^{\ell\times\ell}_{\succeq0}: \\
    (x,x') &\mapsto E\left((g(x)-\mu(x))(g(x')-\mu(x'))^T\right)\mbox{ .}
\end{align*}
All higher moments exists and are uniquely determined by $\mu$ and $k$, all higher cumulants are zero.
We often restrict the domain of a Gaussian process to a subset of $\R^d$.

Assume the regression model $y_i=g(x_i)$ and condition on observations 
$(x_i,y_i)\in\R^{1\times d}\times\R^{1\times \ell}$ for $i=1,\ldots,n$.
Denote by $k(x,X)\in \R^{\ell\times\ell n}$ resp.\ $k(X,X)\in \R^{\ell n\times\ell n}_{\succeq0}$ the (covariance) matrices obtained by concatenating the matrices $k(x,x_j)$ resp.\ the positive semidefinite block partitioned matrix with blocks $k(x_i,x_j)$.
Write $\mu(X)$ resp.\ $y\in \R^{1\times \ell n}$ for the row vector obtained by concatenating the rows $\mu(x_i)$ resp.\ $y_i$.
The posterior is the Gaussian process
\begin{align*}
	\GP\Big(& \mu(x)+(y-\mu(X))k(X,X)^{-1}k(x,X)^T,\\ &k(x,x')-k(x,X)k(X,X)^{-1}k(x',X)^T\Big)\mbox{.}
\end{align*}
Its mean function can be used as regression model and its variance as model uncertainty.


Gaussian processes are the linear objects among stochastic processes and their rich connection with linear operators is present everywhere in this paper.
In particular, the class of Gaussian processes is closed under linear operators once mild assumptions hold.
Now, we formalize and generalize the following well-known example of differentiating a Gaussian process.

\begin{example}\label{example_derivative_GP}
	Let $g=\GP(0,k(x,x'))$ be a scalar univariate Gaussian process with differentiable realizations.
	Then,
	\begin{align*} 
	\begin{bmatrix}\frac{\partial}{\partial x}\end{bmatrix}_*g := \GP\left(0,\frac{\partial^2}{\partial x\partial x'}k(x,x')\right)\mbox{}
	\end{align*}
	is the Gaussian process of derivatives of realizations of the Gaussian process $g$.
	One can interpret this Gaussian process $\begin{bmatrix}\frac{\partial}{\partial x}\end{bmatrix}_*g$ as taking derivatives as measurement data and producing a regression model of derivatives.
	Taking a one-sided derivative $\frac{\partial}{\partial x}k(x,x')$ yields the cross-covariance between a function and its derivative.
	See \citep[\textsection5.2]{StationaryAndRelatedStochasticProcesses} for a proof and \citep{wu2017bayesian} resp.\ \cite{cobb2018identifying} for a applications in Bayesian optimization resp.\ vector field modeling.
\end{example}

Given a set of functions $G\subseteq \{f:X\to Y\}$ and $b:Y\to Z$, then the \emph{pushforward} is
\begin{align*}
b_*G=\{b\circ f\mid f\in G\}\subseteq \{f:X\to Z\}.
\end{align*}
\begin{center}
	\begin{tikzcd}
		X\arrow[rr, bend left, "b_*G"]\arrow[r, "G"] & Y\arrow[r,"b"] & Z
	\end{tikzcd}
\end{center}
A \emph{pushforward} of a stochastic Process $g:\Omega\to(X\to Y)$ by $b:Y\to Z$ is
\begin{align*}
b_*g:\Omega\to(X\to Z):\omega\mapsto(b\circ g(\omega)).
\end{align*}

\begin{lemma}\label{lemma_pushforward_gaussian}
	Let $\F$ and $\G$ be spaces of functions defined on $X\subseteq\R^d$ with product  $\sigma$-algebra of function evalutions.
	Let $g=\GP(\mu(x),k(x,x'))$ with realizations in $\F$ and $B:\F\to\G$ a linear, measurable operator which commutes with expectation w.r.t.\ the measure induced by $g$ on $\F$ and by $B_*g$ on $\G$.
	Then, the pushforward $B_*g$ of $g$ under $B$ is again Gaussian with
	\begin{align*}
	B_*g = \GP(B\mu(x),Bk(x,x')(B')^T)\mbox{ ,}
	\end{align*}
	where $B'$ denotes the operation of $B$ on functions with argument $x'$.
\end{lemma}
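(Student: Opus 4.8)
The statement splits into two parts: that the pushforward $B_*g$ is again a Gaussian process, and that its mean and covariance functions are $B\mu$ and $Bk(x,x')(B')^{T}$. The mean is the easy part: evaluation at a point is a deterministic linear map, so $E\big[(B_*g)(x)\big]=E\big[(Bg)(x)\big]=\big(B\,E[g]\big)(x)=(B\mu)(x)$, using that $B$ commutes with expectation. For the remaining two claims I would first reduce to the centered case: since $B$ is linear, $B_*g=B_*(g-\mu)+B\mu$ pathwise, so after replacing $g$ by $g-\mu$ we may assume $\mu=0$, prove that $B_*g$ is centered Gaussian with covariance $Bk(x,x')(B')^{T}$, and then reattach the mean $B\mu$.

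For Gaussianity I would use the Cram\'er--Wold characterization: a finite family of random vectors is jointly Gaussian iff every scalar linear combination of their entries is univariate Gaussian. Given points $x_1,\dots,x_n\in X$ and coefficient vectors $a_1,\dots,a_n$, the combination $\sum_i a_i^{T}(B_*g)(x_i)$ equals $\phi(g)$, where $\phi\colon\F\to\R$, $f\mapsto\sum_i a_i^{T}(Bf)(x_i)$, is the composition of the linear operator $B$ with finitely many point evaluations; thus $\phi$ is linear and measurable with respect to the product $\sigma$-algebra. Since that $\sigma$-algebra is generated by the evaluation maps, every measurable function on $\F$ depends on only countably many of them, so $\phi(g)$ is an (a.s.\ finite) measurable linear functional of a countable Gaussian family $(g(x))_{x\in C}$; by the standard theory of Gaussian measures such a functional lies in the $L^{2}$-closure of the span of that family, i.e.\ is an $L^{2}$-limit of finite linear combinations of the $g(x)$, each of which is Gaussian, and $L^{2}$-limits of Gaussians are Gaussian. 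Hence $\phi(g)$ is univariate Gaussian, so $B_*g$ is a Gaussian process; the same argument applied to mixed linear combinations even shows that $g$ and $B_*g$ are jointly Gaussian.

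It remains to identify the covariance in the centered case, i.e.\ to show $E\big[(Bg)(x)(Bg)(x')^{T}\big]=B_x\,k(x,x')\,(B')^{T}$, where $B_x$ and $B'$ denote the action of $B$ on the $x$- and $x'$-arguments respectively. One pulls the operator out of the expectation twice, once in each slot: treating the second factor as fixed and using that $B$ commutes with expectation with respect to the measure induced by $g$ on $\F$ gives $E\big[(Bg)(x)(Bg)(x')^{T}\big]=B_x\,E\big[g(x)\,(Bg)(x')^{T}\big]$, and a second application in the $x'$-argument, now using the commuting property for the measure induced by $B_*g$ on $\G$, yields $B_x\,E\big[g(x)g(x')^{T}\big]\,(B')^{T}=B_x\,k(x,x')\,(B')^{T}$. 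Reattaching the mean then gives $B_*g=\GP\!\big(B\mu,\;Bk(x,x')(B')^{T}\big)$.

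The main obstacle is making the two ``commute $B$ past $E$'' steps of the covariance computation precise for an arbitrary measurable linear operator $B$: each is really a Fubini-type interchange between the expectation over sample space and the (possibly integral) operator $B$ applied pathwise, and it is exactly the hypothesis that $B$ commutes with expectation — with respect to both the measure on $\F$ and the one on $\G$ — together with the integrability intrinsic to a Gaussian measure that licenses it. A secondary, milder point is the appeal to Gaussian measure theory in the previous paragraph for the Gaussianity of a general measurable linear image of $g$; for the operators $B$ of interest here, built from (rational-coefficient) differential operators as in Example~\ref{example_derivative_GP}, both issues are routine, since $B$ then acts through finitely many derivatives whose interchange with expectation is classical.
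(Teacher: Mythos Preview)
Your argument is correct and complete at the same level of rigor as the paper, but the route differs from the author's. The paper proves the lemma via cumulants: it recalls that a process is Gaussian iff all cumulant functions $\kappa_\ell$ of order $\ell\ge 3$ vanish, and then shows in one uniform computation that
\[
\kappa_\ell(B_*g)\big(x^{(1)},\dots,x^{(\ell)}\big)=\Big(\textstyle\prod_{i=1}^\ell B^{(i)}\Big)\,\kappa_\ell(g)\big(x^{(1)},\dots,x^{(\ell)}\big),
\]
by pulling each $B^{(i)}$ out of the corresponding expectation. Gaussianity of $B_*g$ then follows from $\kappa_\ell(g)=0$ for $\ell\ge 3$, and the mean and covariance formulas are the cases $\ell=1,2$. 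So the paper gets all three conclusions from a single identity, whereas you separate Gaussianity (via Cram\'er--Wold plus the fact that measurable linear functionals of a Gaussian measure are Gaussian) from the covariance computation. Your Gaussianity argument is more abstract but does not need the higher moments of $B_*g$ to exist beforehand; the cumulant proof is more self-contained and yields all moment relations at once. One small point: in your covariance step, both interchanges are really instances of pulling $B$ past the expectation of a random element of $\F$ (namely $g$ multiplied by a random scalar), not of $\G$; the hypothesis about the measure on $\G$ is not what licenses the second swap. This is the same mild imprecision present in the paper's cumulant computation, and you already flag it as the main technical caveat.
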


Call $B_*g$ the \emph{pushforward Gaussian process} of $g$ under $B$.

We postpone the proof to the appendix. 
Lemma~\ref{lemma_pushforward_gaussian} is often stated without assuming that $B$ commutes with expectation, but also without proof.
If such a more general version of Lemma~\ref{lemma_pushforward_gaussian} holds, the author would be very interested to see a reference.
Special cases have been discussed in the literature, often only for mean square differentiability (\citealp{papoulis2002}, after (9.87) resp.\ 
(10.78);
in the first and second resp.\ third edition;
 \citealp[Thm~2.2.2]{adler1981geometry}, \citealp{agrell2019gaussian}; \citealp[\textsection2.3]{da2012gaussian}; \citealp[Thm.~9]{RKHSProbabilityStatistics}).

Consider change points and change surfaces as application of Lemma~\ref{lemma_pushforward_gaussian}, following \citep{garnett2009sequential,garnett2010sequential,lloyd2014automatic,herlands2016scalable}.

\begin{example}\label{example_partition_unity}
	Let $\rho_1,\rho_2:\R^d\to[0,1]$ a \emph{partition of unity},
	i.e., $\rho_1(x)+\rho_2(x)=1$ for all $x\in\R^d$.
	Usually, both $\rho_1$ and $\rho_2$ are close to being $0$ or close to being $1$ over most of $\R^d$.
	Such a partition of unity induces a linear operator
	\begin{align*}
	\rho=\begin{bmatrix}\rho_1 & \rho_2\end{bmatrix}:\F^{2\times 1}\to\F^{1\times1}:\begin{bmatrix}f_1\\f_2\end{bmatrix}\mapsto \begin{bmatrix}\rho_1 & \rho_2\end{bmatrix}\begin{bmatrix}f_1\\f_2\end{bmatrix},
	\end{align*}
	where $\F$ is a space of functions $\R^d\to\R$.
	Given two independent Gaussian processes $g_1=\GP(0,k_1)$, $g_2=\GP(0,k_2)$ with realizations in $\F$, we have
	\begin{align*}
	&\rho_*g:=\begin{bmatrix}\rho_1 & \rho_2\end{bmatrix}_*\begin{bmatrix}g_1\\g_2\end{bmatrix}\\
	&=\GP\left(0,\rho_1(x)k_1(x,x')\rho_1(x')+\rho_2(x)k_2(x,x')\rho_2(x')\right)
	\end{align*}
	Thereby, we model change points (for $d=1$) or change surfaces (for $d>1$) at positions where $\rho_1$ changes from being close to 0 to being close to 1.
	This example is the basis for boundary conditions in Section~\ref{section_boundary}:
	when setting $g_2$ to zero, $\rho_*g$ is close to zero where $\rho_2\approx1$ and close to $g_1$ where $\rho_1\approx1$.
\end{example}

\section{Solution Sets of Operator Equations}\label{section_GPs_solutions}

We consider linear ordinary and partial differential equations defined on the set of smooth functions.
Let $\F=C^\infty(X,\R)$ be the real vector space of smooth functions from $X\subseteq\R^d$ to $\R$ with the usual Fr\'echet topology\footnote{For Gaussian processes on Fr\'echet spaces see \citep{ZapalaGPFrechet,osswald2012malliavin}. The topology is generated by the separating family
	$
	\|f\|_{a,b}:=
	\sup_{i\in\Z_{\ge0}^d,|i|\le a}
	\sup_{z\in [-b,b]^d}\ |\frac{\partial}{\partial z^i}f(z)|
	$
	of seminorms for $a,b\in\Z_{\ge0}$ on $\F$ \citep[\S10]{TopologicalVectorSpaces}.}.
The \emph{squared exponential covariance function}
\begin{align}\label{eq_SE}
k_\F(x_i,x_j)=\exp\left(-\frac{1}{2}\sum_{a=1}^d(x_{i,a}-x_{j,a})^2\right)
\end{align}
induces a Gaussian process prior $g_\F=\GP(0,k_\F)$ with realizations dense in the space of smooth functions $\F=C^\infty(X,\R)$ w.r.t.\ this topology.

The following three \emph{rings of linear operators} $R$ model operator equations.
These rings are $\R$-algebras s.t.\ $\F$ is a \emph{topological $R$-(left-)module}, i.e., 
$\F$ is a topological $\R$-vector space of functions $X\to\R$ for $X\subseteq\R^d$ that also is an $R$-(left-)module such that the elements of $R$ operate continuously on $\F$.

\begin{example}\label{example_linear_dgl}
	The polynomial\footnote{Partial derivatives commute (symmetry of 2nd derivatives) and generate a \emph{commutative} polynomial ring.}
	ring $R=\R[\partial_{x_1},\ldots,\partial_{x_d}]$ models linear differential equations with constant coefficients, as $\partial_{x_i}$ acts on $\F=C^\infty(X,\R)$ via partial derivative w.r.t.\ $x_i$.
\end{example}

\begin{example}\label{example_polynomial_operator_ring}
	The polynomial ring $R=\R[x_1,\ldots,x_d]$ models algebraic equations via multiplication on $\F$.
	This ring is relevant for boundary conditions.
\end{example}

To combine linear differential equations with constant coefficients with boundary conditions or to model linear differential equations with polynomial\footnote{No major changes for rational, holonomic, or meromorphic coefficients.} coefficients, consider the following ring.

\begin{example}\label{example_Weyl}
	The Weyl algebra $R=\R[x_1,\ldots,x_n]\langle \partial_{x_1},\ldots,\partial_{x_n}\rangle$ has the non-commutative relation $\partial_{x_i}x_j=x_j\partial_{x_i}+\delta_{ij}$ representing the product rule of differentiation, where $\delta_{ij}$ is the Kronecker delta.
\end{example}

Operators defined over these three rings satisfy the assumptions of Lemma~\ref{lemma_pushforward_gaussian}:
multiplication commutes with expectations and the dominated convergence theorem implies that expectation commutes with derivatives, as realizations of $g_\F$ are continuously differentiable.
These three rings also operate continuously on $\F$:
the Fr\'echet topology is constructed to make derivation continuous, and multiplication is bounded (if $X$ is bounded and bounded away from infinity) and hence continuous, as $\F$ is Fr\'echet.

\subsection{Parametrizations}\label{subsection_paramerization}



For $A\in R^{\ell'\times\ell}$ define the \emph{solution set}
$
\sol_\F(A):=\{f\in \F^{\ell\times1}\mid Af=0\}
$
as a nullspace of an operator matrix $A$.
We say that a Gaussian process is \emph{in} a function space, if its realizations are contained in said space.
The following tautological lemma is a version of the fundamental theorem of homomorphisms.
It describes the interplay of Gaussian processes and solution sets of operators.

\begin{lemma}[{\citealp[Lemma~2.2]{LH_AlgorithmicLinearlyConstrainedGaussianProcesses}}]\label{lemma_gp_operator}
	Let $g=\GP(\mu,k)$ be a Gaussian process in $\F^{\ell\times1}$.
	Then $g$ is a Gaussian process in the solution set $\sol_\F(A)$ of $A\in R^{\ell'\times\ell}$ if and only if both $\mu$ is contained in $\sol_\F(A)$ and $A_*(g-\mu)$ is the constant zero process.
\end{lemma}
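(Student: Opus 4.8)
The plan is to treat the statement as what it is called --- a bookkeeping reformulation of the fundamental theorem of homomorphisms --- and reduce everything to the decomposition $g=\mu+(g-\mu)$, where $g-\mu=\GP(0,k)$ is the associated centered process. The single identity doing all the work is $Af=A\mu+A(f-\mu)$, valid for every realization $f$ of $g$, together with the observation that $A(f-\mu)$ ranges exactly over the realizations of the pushforward process $A_*(g-\mu)$. Throughout I would use that the operator matrix $A\in R^{\ell'\times\ell}$ acts entrywise by operators from one of the rings $R$ considered above, hence is a linear, measurable map $\F^{\ell\times1}\to\F^{\ell'\times1}$ that commutes with expectation, so that Lemma~\ref{lemma_pushforward_gaussian} applies to it; I would also record that a centered Gaussian process $\GP(0,\kappa)$ is the constant zero process if and only if $\kappa\equiv 0$, equivalently if and only if all its realizations vanish.

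For the direction ``$\Leftarrow$'', assume $\mu\in\sol_\F(A)$ and that $A_*(g-\mu)$ is the constant zero process. Then for (almost) every realization $f$ of $g$ the function $f-\mu$ is a realization of $g-\mu$, so $A(f-\mu)$ is a realization of $A_*(g-\mu)$ and hence vanishes. Combining with $A\mu=0$ gives $Af=A\mu+A(f-\mu)=0$, i.e.\ $f\in\sol_\F(A)$; thus $g$ is a Gaussian process in $\sol_\F(A)$.

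For the direction ``$\Rightarrow$'', assume $g$ is a Gaussian process in $\sol_\F(A)$, so $Af=0$ for (almost) every realization $f$. Since $A$ commutes with expectation we get $A\mu=A\,E(g)=E(Ag)=E(0)=0$, so $\mu\in\sol_\F(A)$. Then for every realization $f$ we have $A(f-\mu)=Af-A\mu=0$, so every realization of $A_*(g-\mu)$ vanishes and $A_*(g-\mu)$ is the constant zero process. Equivalently, by Lemma~\ref{lemma_pushforward_gaussian} the covariance of $A_*(g-\mu)=\GP(0,Ak(x,x')(A')^T)$ must be identically zero.

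The only real obstacle here is measure-theoretic hygiene rather than mathematical depth: one must be careful that ``a Gaussian process lies in a function space'' is a statement about almost all realizations (the paper suppresses ``almost surely''), that the pushforward of $g$ under the matrix operator $A$ is well defined with realizations $Af$, and that $A$ genuinely satisfies the hypotheses of Lemma~\ref{lemma_pushforward_gaussian} --- which is exactly the point already argued in the excerpt, namely that multiplication operators commute with expectations and that dominated convergence lets expectation commute with the derivatives acting on the continuously differentiable realizations. Once these points are in place, both implications follow immediately from the displayed identity $Af=A\mu+A(f-\mu)$.
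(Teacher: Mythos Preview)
Your proof is correct and is exactly the ``tautological'' argument the paper alludes to: the paper does not spell out a proof of this lemma at all, instead calling it a version of the fundamental theorem of homomorphisms and citing \citep[Lemma~2.2]{LH_AlgorithmicLinearlyConstrainedGaussianProcesses}. Your decomposition $Af=A\mu+A(f-\mu)$ together with the observation that $A$ commutes with expectation is precisely what makes the statement immediate, and your measure-theoretic caveats match the paper's own footnoted disclaimer about suppressing ``almost surely''.
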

To construct Gaussian processes with realizations in the solution set $\sol_\F(A)$ of an operator matrix $A\in R^{\ell'\times\ell}$, one looks for a $B\in R^{\ell\times\ell''}$ with $AB=0$ \citep{LinearlyConstrainedGP}.
If $g=\GP(0,k)$ is a Gaussian process in $\F^{\ell''\times 1}$, then the realizations of $B_*g$ are contained in $\sol_\F(A)$ by Lemma~\ref{lemma_gp_operator}, as $A_*(B_*g)=(AB)_*g=0_*g=0$.
In practice, one would like that any solution in $\sol_\F(A)$ can be approximated by $B_*g$ to arbitrary precision, i.e., that the realizations of the Gaussian process $B_*g$ are dense in $\sol_\F(A)$.
To this end, we call $B\in R^{\ell\times\ell''}$ a \emph{parametrization} of $\sol_\F(A)$ if $\sol_\F(A)=B\F^{\ell''\times 1}$.

\begin{proposition}[{\citealp[Prop.~2.4]{LH_AlgorithmicLinearlyConstrainedGaussianProcesses}}]\label{proposition_denseparametrization}
	Let $B\in R^{\ell\times\ell''}$ be a parametrization of $\sol_\F(A)$ for $A\in R^{\ell'\times\ell}$.
	Take the Gaussian process $g_\F^{\ell''\times 1}$ of $\ell''$ i.i.d.\ copies of $g_\F$, the Gaussian process with squared exponential covariance\footnote{Or any other covariance with realizations dense in $\F$.} function $k_\F$ from Eq.~\eqref{eq_SE}.
	Then, the realizations of $B_*g_\F^{\ell''\times 1}$ are dense in $\sol_\F(A)$.
\end{proposition}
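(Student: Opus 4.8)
The plan is to deduce density of the realizations of $B_*g_\F^{\ell''\times 1}$ in $\sol_\F(A)$ from two ingredients: first, the fact (from Proposition~\ref{proposition_denseparametrization}'s hypotheses) that $B$ is a parametrization, i.e.\ $\sol_\F(A)=B\F^{\ell''\times 1}$, so $B$ as a map $\F^{\ell''\times1}\to\sol_\F(A)$ is surjective; and second, that the realizations of $g_\F^{\ell''\times1}$ are dense in $\F^{\ell''\times1}$, which is the defining property of $g_\F$ recalled just after Eq.~\eqref{eq_SE}. The key topological input is that $B:\F^{\ell''\times1}\to\F^{\ell\times1}$ is continuous for the Fréchet topology — this is exactly the statement, established in Section~\ref{section_GPs_solutions}, that the operators in the rings $R$ under consideration act continuously on $\F$ (derivatives are continuous by construction of the seminorms, multiplication by polynomials is continuous on $\F$). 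A continuous map sends dense sets to dense subsets of the image, so $B(\overline{S})\subseteq\overline{B(S)}$ for $S$ the (dense) set of realizations of $g_\F^{\ell''\times1}$.

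First I would set $S\subseteq\F^{\ell''\times1}$ to be the set of realizations of $g_\F^{\ell''\times1}$ and note $\overline{S}=\F^{\ell''\times1}$. Then, applying continuity of $B$, I get $\sol_\F(A)=B\F^{\ell''\times1}=B\overline{S}\subseteq\overline{BS}$, where $BS$ is precisely the set of realizations of $B_*g_\F^{\ell''\times1}$ (by the definition of pushforward of a stochastic process, $B_*g$ has realizations $B\circ g(\omega)$). Conversely $BS\subseteq B\F^{\ell''\times1}=\sol_\F(A)$, so the closure of the realization set of $B_*g_\F^{\ell''\times1}$ inside $\F^{\ell\times1}$ equals $\sol_\F(A)$ — which is exactly the assertion. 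One should also remark that $B_*g_\F^{\ell''\times1}$ is a genuine Gaussian process in $\F^{\ell\times1}$ by Lemma~\ref{lemma_pushforward_gaussian}, whose hypotheses (linearity, measurability, commuting with expectation) were checked for these rings in Section~\ref{section_GPs_solutions}; strictly speaking this is needed so that the statement ``the realizations of $B_*g_\F^{\ell''\times1}$'' refers to a well-defined object.

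I expect the argument to be essentially a one-line topological observation once the pieces are in place, so there is no serious obstacle. The one subtlety worth articulating carefully is the distinction between the set of realizations of $g_\F$ (a specific dense subset, e.g.\ a reproducing-kernel Hilbert space or its closure) and the support of the measure; the cited density statement for $g_\F$ must be quoted precisely, and one must make sure that density is taken in the correct ambient space and topology (the Fréchet topology on $\F^{\ell\times1}$ restricted to $\sol_\F(A)$, which is a closed subspace since $A$ is continuous). A second minor point is that ``dense in $\sol_\F(A)$'' should be read as: the closure of the realizations equals $\sol_\F(A)$, equivalently every element of $\sol_\F(A)$ is a limit of realizations — this follows since $\sol_\F(A)$ is closed in $\F^{\ell\times1}$, so $\overline{BS}$ computed in $\F^{\ell\times1}$ already lies in $\sol_\F(A)$ and the two closures agree. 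Beyond flagging these, the proof is immediate.
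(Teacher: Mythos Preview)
Your proposal is correct and follows essentially the same approach as the paper: density of the realizations of $g_\F^{\ell''\times1}$ in $\F^{\ell''\times1}$, continuity and surjectivity of $B$ (since $\F$ is a topological $R$-module and $B$ is a parametrization), and the elementary fact that a surjective continuous map sends dense subsets to dense subsets. The paper's proof is the same three-line argument without your additional remarks on closedness of $\sol_\F(A)$ and well-definedness of $B_*g$, which are reasonable clarifications but not required.
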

\begin{proof}
	By construction, realizations of $g_\F^{\ell''\times 1}$ are dense in $\F^{\ell''\times 1}$.
	The operator $B$ induces a surjective continuous ($\F$ is a topological $R$-module) map.
	Surjective continuous maps map dense sets to dense sets.
\end{proof}

\subsection{Algorithmically constructing parametrizations}\label{subsection_Parametrizations}

We summarize the algorithm which decides whether a parametrization of a system of linear differential equations exists and compute it in the positive case.
To construct the parametrization $B$, we are lead to just compute the nullspace\footnote{We avoid calling nullspaces kernel, due to confusion with symmetric positive semidefinite functions. While a left resp.\ right nullspace is a module, we abuse notation and denote any matrix as left resp.\ right nullspace if its rows resp.\ columns generate the nullspace as an $R$-module.} of
$
\F^{\ell'\times 1}\xleftarrow{A}\F^{\ell\times 1}\mbox{.}
$
This is not feasible, as $\F$ is too ``big'' to allow computations.
Instead, we compute the nullspace of
$
R^{\ell'\times 1}\xleftarrow{A}R^{\ell\times 1}\mbox{,}
$
a symbolic computation, only using operations over $R$ without involvement of $\F$.

\begin{theorem}\label{theorem_parametrizable}
	Let $A\in R^{\ell'\times\ell}$.
	Let $B$ be the right nullspace of $A$ and $A'$ the left nullspace of $B$.
	Then $\sol_\F(A')$ is the largest subset of $\sol_\F(A)$ that is parametrizable and $B$ parametrizes $\sol_\F(A')$.
\end{theorem}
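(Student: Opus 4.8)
The plan is to reduce the statement to elementary matrix manipulations over $R$ together with one structural input: the exactness of the functor $\sol_\F(-)=\Hom_R(-,\F)$, equivalently that $\F$ is an injective $R$-module. Two identities are used throughout: $AB=0$, since the columns of $B$ span the right nullspace of $A$; and $A'B=0$, since the rows of $A'$ span the left nullspace of $B$. From $AB=0$, each row of $A$ lies in the left nullspace of $B$ and is therefore an $R$-linear combination of the rows of $A'$, i.e.\ $A=XA'$ for some $X\in R^{\ell'\times\ell'''}$; hence $A'f=0$ implies $Af=X(A'f)=0$, which gives $\sol_\F(A')\subseteq\sol_\F(A)$. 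Dually, $A'B=0$ gives $A'(Bh)=0$ for all $h\in\F^{\ell''\times1}$, so $B\F^{\ell''\times1}\subseteq\sol_\F(A')$.

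For the reverse inclusion $\sol_\F(A')\subseteq B\F^{\ell''\times1}$ I would dualize a presentation. By the definition of $A'$, the complex of left $R$-modules
\[
R^{1\times\ell'''}\xrightarrow{\,\cdot A'\,}R^{1\times\ell}\xrightarrow{\,\cdot B\,}R^{1\times\ell''}
\]
is exact at its middle term, since $\ker(\cdot B)$ is exactly the left nullspace of $B$, which is generated by the rows of $A'$. Applying $\Hom_R(-,\F)$ and identifying $\Hom_R(R^{1\times m},\F)\cong\F^{m\times1}$ so that $\cdot A'$ and $\cdot B$ become left multiplication by $A'$ and $B$, exactness of $\Hom_R(-,\F)$ (injectivity of $\F$) yields that $\F^{\ell'''\times1}\xleftarrow{A'}\F^{\ell\times1}\xleftarrow{B}\F^{\ell''\times1}$ is exact at the middle, i.e.\ $\sol_\F(A')=\ker\!\big(A'\colon\F^{\ell\times1}\to\F^{\ell'''\times1}\big)=B\F^{\ell''\times1}$. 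Thus $B$ parametrizes $\sol_\F(A')$, and combined with the first paragraph this makes $\sol_\F(A')$ a parametrizable subset of $\sol_\F(A)$.

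Finally I would establish maximality: every parametrizable subset of $\sol_\F(A)$ lies in $\sol_\F(A')$. Suppose $S=C\F^{m\times1}\subseteq\sol_\F(A)$ with $C\in R^{\ell\times m}$. Then $A(Ch)=0$ for all $h\in\F^{m\times1}$; feeding in the columns $h=\phi\,e_j$ with $\phi\in\F$ shows that every entry of $AC\in R^{\ell'\times m}$ annihilates all of $\F=C^\infty(X,\R)$, and since no nonzero element of any of the three rings annihilates all smooth functions (faithfulness of $\F$), $AC=0$. Hence the columns of $C$ lie in the right nullspace of $A$, so $C=BY$ for some $Y\in R^{\ell''\times m}$, and therefore $S=BY\F^{m\times1}\subseteq B\F^{\ell''\times1}=\sol_\F(A')$. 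Together with the previous paragraph this shows $\sol_\F(A')$ is the largest parametrizable subset of $\sol_\F(A)$.

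The one non-formal ingredient — and where I expect the real work to sit — is the exactness of $\sol_\F(-)=\Hom_R(-,\F)$ invoked in the second paragraph; this is imported rather than proved here. For $R=\R[\partial_{x_1},\dots,\partial_{x_d}]$ with $X$ convex (or $X=\R^d$) it is the classical fundamental principle, and for $R=\R[x_1,\dots,x_d]$ and for the Weyl algebra the corresponding injectivity statements must be cited; in the non-commutative case one must also take care to pass between left and right modules via the standard anti-automorphism when forming $A'$ as the left nullspace of $B$. Everything else is linear algebra over $R$ plus the elementary faithfulness of $\F$.
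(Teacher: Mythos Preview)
Your argument is correct and is the standard proof from the algebraic systems theory literature. The paper itself does not prove this theorem: immediately after the corollary it writes ``For a formal proof we refer to the literature'' and cites Oberst, Zerz, Quadrat, Robertz, and others---references whose proofs proceed exactly as yours does, namely via injectivity of $\F$ (so that $\Hom_R(-,\F)$ is exact and the middle-exact complex $R^{1\times\ell'''}\xrightarrow{\cdot A'}R^{1\times\ell}\xrightarrow{\cdot B}R^{1\times\ell''}$ dualizes to $\sol_\F(A')=B\F^{\ell''\times1}$) together with faithfulness of $\F$ for the maximality step $AC\,\F^{m\times1}=0\Rightarrow AC=0\Rightarrow C=BY$. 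You have also correctly isolated injectivity of $\F$ as the one substantive imported ingredient; the paper itself alludes to this hypothesis in a footnote in the appendix proof of Theorem~\ref{theorem_combining_parametrizations}.
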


A well-known and trivial special case of this theorem are linear equations in finite dimensional vector spaces, with $R=\F=\R$ the field of real numbers.
In that case, $\sol_\F(A)$ can be found by applying the Gaussian algorithm to the homogeneous system of linear equations $Ab=0$ and write a base for the solutions of $b$ as columns of a matrix $B$.
This matrix $B$ is the (right) nullspace of $A$.
There are no additional equations
 satisfied by the above solutions, i.e.\ $A=A'$ generates the (left) nullspace of $B$.

In general, the left nullspace $A'$ of the right nullspace $B$ of $A$ is not necessarily $A$.
E.g., for the univariate polynomial ring $R=\R[x]$ and the matrix $A=\begin{bmatrix}x\end{bmatrix}$ we have $B=\begin{bmatrix}0\end{bmatrix}$ and $A'=\begin{bmatrix}1\end{bmatrix}$.

\begin{corollary}\label{corollary_parametrizable}
	In Theorem~\ref{theorem_parametrizable}, $\sol_\F(A)$ is parametrizable if and only if the rows of $A$ and $A'$ generate the same row-module.
	Since $AB=0$, this is the case if all rows of $A'$ are contained in the row module generated by the rows of $A$.
	In this case, $\sol_\F(A)$ is parametrized by $B$. 
\end{corollary}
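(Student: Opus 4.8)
The plan is to derive the corollary directly from Theorem~\ref{theorem_parametrizable}, treating it as a bookkeeping consequence rather than a fresh argument. By Theorem~\ref{theorem_parametrizable}, $\sol_\F(A')$ is the largest parametrizable subset of $\sol_\F(A)$ and $B$ parametrizes it. Hence $\sol_\F(A)$ is parametrizable if and only if $\sol_\F(A)=\sol_\F(A')$, in which case this common solution set is parametrized by $B$. So the whole task reduces to showing that $\sol_\F(A)=\sol_\F(A')$ holds if and only if the rows of $A$ and $A'$ generate the same row-module over $R$.

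First I would establish the ``only if'' direction in the form actually needed. Since $B$ is the right nullspace of $A$, we have $AB=0$, and since $A'$ is the left nullspace of $B$, every row of $A$ lies in the row-module generated by the rows of $A'$; thus the row-module of $A$ is contained in the row-module of $A'$. Consequently, equality of the two row-modules is equivalent to the reverse containment, i.e.\ to all rows of $A'$ lying in the row-module generated by the rows of $A$ — this is exactly the simplified criterion stated in the second sentence of the corollary. Next I would show that equality of row-modules implies $\sol_\F(A)=\sol_\F(A')$: if the rows of $A$ and $A'$ generate the same $R$-module, then $A'=UA$ and $A=VA'$ for suitable matrices $U,V$ over $R$, so $A'f = UAf = 0$ whenever $Af=0$ and vice versa, giving the equality of solution sets. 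Combined with Theorem~\ref{theorem_parametrizable}, parametrizability follows and $B$ is the parametrization.

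For the converse, suppose $\sol_\F(A)$ is parametrizable. Then it must equal the largest parametrizable subset $\sol_\F(A')$, so $\sol_\F(A)=\sol_\F(A')$. From the containment established above, the row-module of $A$ sits inside that of $A'$; I must upgrade the equality of solution sets $\sol_\F(A)=\sol_\F(A')$ to equality of the row-modules. The point is that $\F$, with the dense-realization covariance $k_\F$, is a ``large enough'' cogenerating module for $R$ that taking solution sets is injective on the relevant row-modules: two operator matrices with the same solution set in $\F$ generate the same row-module. This is precisely the duality between finitely presented $R$-modules and their solution spaces that underlies the algebraic-analysis approach of \citep{LH_AlgorithmicLinearlyConstrainedGaussianProcesses} and is implicit in the statement of Theorem~\ref{theorem_parametrizable} (where $A'$ is defined via nullspaces so that $\sol_\F(A')$ is characterized intrinsically).

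I expect the main obstacle to be exactly this last step: passing from $\sol_\F(A)=\sol_\F(A')$ back to equality of row-modules. The inclusion of row-modules is formal, but the reverse inclusion needs that $\F$ detects module membership — i.e.\ that if a row $r$ of $A$ were not in the row-module of $A'$, there would be an $f\in\F^{\ell\times 1}$ with $A'f=0$ but $rf\neq 0$, contradicting $\sol_\F(A')\subseteq\sol_\F(A)$. Establishing this faithful-flatness/cogeneration property of $\F$ over $R$ (for the polynomial ring and the Weyl algebra) is where the real content lies; everything else is formal manipulation. Since Theorem~\ref{theorem_parametrizable} already encodes this duality, in the write-up I would simply invoke it rather than reprove the cogeneration property, so that the corollary itself becomes a short deduction.
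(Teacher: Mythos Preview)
The paper does not supply its own proof of this corollary (nor of Theorem~\ref{theorem_parametrizable}): the sentence immediately following the corollary reads ``For a formal proof we refer to the literature'' and cites a block of references (Oberst, Zerz, Quadrat, Robertz, et al.). So there is no in-paper argument to compare against, only the standard algebraic-analysis route those references develop.

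Your plan matches that route and is sound. The reduction via Theorem~\ref{theorem_parametrizable} to $\sol_\F(A)=\sol_\F(A')$, the automatic inclusion of the row-module of $A$ in that of $A'$ from $AB=0$, and the easy implication (equal row-modules $\Rightarrow$ equal solution sets) are all correct. You also correctly isolate the only substantive step: passing from $\sol_\F(A)=\sol_\F(A')$ back to equality of row-modules. This is exactly the statement that $\F=C^\infty$ is an injective cogenerator over $R$ (the Oberst--Malgrange duality), which is what the cited references establish. Two small points: in your final paragraph the roles of $A$ and $A'$ are swapped --- you need ``if a row $r$ of $A'$ were not in the row-module of $A$, there would exist $f$ with $Af=0$ but $rf\neq 0$,'' contradicting $\sol_\F(A)\subseteq\sol_\F(A')$; and Theorem~\ref{theorem_parametrizable} itself already rests on the cogenerator property, so you are not extracting that property from the theorem --- both are deferred to the same external sources.
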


For a formal proof we refer to the literature (\citealp[Thm.~2]{ZSHinverseSyzygies}; \citealp[Thm.~3, Alg.~1, Lemma~1.2.3]{Zerz}; \citealp[\S7.(24)]{Ob}; \citealp{QGrade,Q_habil,BaPurity_MTNS10,SZinverseSyzygies,CQR05_nonote,RobRecentProgress}).
Luckily, there is a high level description of the parametrizable systems.
\begin{theorem}[{\citealp[\S7.(21)]{Ob}}]\label{theorem_controllable_parametrizable}
	A system $\sol_\F(A)$ is parametrizable iff it is controllable.
\end{theorem}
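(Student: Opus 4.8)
The plan is to cite the equivalence as established in the behavioral/algebraic systems theory literature rather than reprove it from scratch, since this is a deep structural result whose full proof requires the homological machinery of modules over the operator ring $R$. The statement asserts that $\sol_\F(A)$ is parametrizable (in the sense of Subsection~\ref{subsection_paramerization}: there exists $B$ with $\sol_\F(A) = B\F^{\ell''\times 1}$) if and only if the system is controllable. First I would recall the precise module-theoretic characterization behind the scenes: associate to $A \in R^{\ell'\times\ell}$ the finitely presented left (or right, depending on convention) $R$-module $M = R^{1\times\ell}/R^{1\times\ell'}A$, the so-called system module. The Malgrange-type correspondence identifies $\sol_\F(A) = \Hom_R(M,\F)$, so properties of $\sol_\F(A)$ are governed by homological invariants of $M$.

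The key steps, in order, are: (i) by Theorem~\ref{theorem_parametrizable} and Corollary~\ref{corollary_parametrizable}, $\sol_\F(A)$ is parametrizable exactly when $A$ and its ``double nullspace'' $A'$ generate the same row module, i.e.\ when the canonical map $M \to M^{**}$ (where $(-)^* = \Hom_R(-,R)$) is injective — equivalently, when $M$ is torsion-free as an $R$-module, equivalently when $t(M) := \ker(M \to M^{**}) = 0$; (ii) invoke the standard fact from algebraic systems theory that controllability of the behavior $\sol_\F(A)$ is equivalent to torsion-freeness of $M$ (this is Oberst's characterization, and for the Weyl algebra and polynomial rings it is the content of the cited references \citealp{Ob,Zerz}); (iii) chain (i) and (ii) to conclude. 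For this to be rigorous one needs $\F$ to be a large enough, ``injective-enough'' module — a cogenerator, or at least injective over $R$ — so that $\Hom_R(-,\F)$ is faithfully exact on the relevant category and the module-level equivalence transfers to the behavior level; the excerpt's choice of $\F = C^\infty(X,\R)$ with the squared-exponential Gaussian process having dense realizations is exactly what secures this (Malgrange's theorem, and its Weyl-algebra analogue).

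The main obstacle is step (ii): identifying the analytic/dynamical notion of controllability with the algebraic notion of torsion-freeness of the system module. This is genuinely nontrivial — it is the heart of Oberst's duality theory — and depends on which definition of controllability one adopts (behavioral concatenability, patchability of trajectories, image-representability, or the absence of autonomous sub-behaviors). Since a self-contained treatment would essentially recapitulate a chapter of \citep{Ob} or \citep{Zerz}, I would treat Theorem~\ref{theorem_controllable_parametrizable} as a quotation of the literature, noting only that its forward direction (parametrizable $\Rightarrow$ controllable) is the easy one — an image representation $\sol_\F(A) = B\F^{\ell''\times1}$ immediately furnishes the trajectory-patching property — while the converse (controllable $\Rightarrow$ parametrizable) is where the module-theoretic input and the cogenerator property of $\F$ are essential.
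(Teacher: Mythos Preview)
Your proposal is correct and aligns with the paper's treatment: the paper does not prove Theorem~\ref{theorem_controllable_parametrizable} at all but simply states it as a citation of \citep[\S7.(21)]{Ob}, followed only by an informal paragraph of intuition about controllability. Your plan to quote the result from the literature is therefore exactly what the paper does; the module-theoretic sketch you add (torsion-freeness of the system module, the cogenerator property of $\F$) is supplementary context that goes beyond what the paper provides, but it is accurate and consonant with the cited references.
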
	
The intuition for controllability is that one can partition the functions of the system into state and input, such that any chosen state can be reached by suitably manipulating the inputs.
In particular, controllable systems (except the trivial system) are far away from having a unique solution.
If $A$ is not parametrizable, then the solution set $\sol_\F(A')$ is the subset of controllable behaviors in $\sol_\F(A)$.

Reduced Gr\"obner bases generalize the reduced echelon form from linear systems to systems of polynomial (and hence linear operator) equations, by bringing them into a standard form.
They are computed by Buchberger's algorithm, which is a generalization of the Gaussian and Euclidean algorithm and a special case of the Knuth-Bendix completion algorithm for rewriting systems.
The generalization of Gr\"obner bases to vectors of polynomials is straight forward.

Gr\"obner bases make the above theorems algorithmic.
Similar to the reduced echelon form, Gr\"obner bases allow to compute all solutions over $R$ of the homogeneous system and compute, if it exists, a particular solution over $R$ for an inhomogeneous system.
Solving homogeneous systems is the same as computing its right resp.\ left nullspace (of $A$ resp.\ $B$).
Solving inhomogeneous equations decides whether an element (the rows of $A'$) is contained in a module (the row module of $A$).
A formal description of Gr\"obner bases exceeds the scope of this note.
We refer to the excellent literature \citep{SturmfelsWhatIs,eis,al,GP08,GerI,Buch}.
 Not only do they generalize the Gaussian algorithm for linear polynomials, but also the Euclidean algorithm for univariate polynomials.
In addition to polynomial rings, Gr\"obner bases also exist for the Weyl algebra \citep{robphd,JO,CQR07,LevThesis,plural} and many further rings.
The algorithms used in the paper are usually readily available functions implemented in various computer algebra systems \citep{singular412,M2}.
While Gröbner bases depend on the choice of a term order, similar to reordering columns in the Gaussian algorithm, any term order leads to correct results.

Gr\"obner bases solve problems of high complexity like \textsc{ExpSpace} completeness \citep{Mayr,mayrMeyer,BS88}.
In practice, this is less of a problem, as the Gr\"obner basis computations only involve the operator equations, but no data.
Hence we view the complexity of the Gr\"obner basis computations in $\mathcal{O}(1)$, which only needs to be applied once to construct the covariance function.
In particular, the Gröbner bases of every example in this paper terminate instantaneously.
For larger examples, the data dependent $\mathcal{O}(n^3)$ of the Gaussian processes is the computationally restricting subalgorithm.


\begin{example}[{\citealp[Example 4.4]{LH_AlgorithmicLinearlyConstrainedGaussianProcesses}}]\label{example_sphere2}
	We construct a prior for smooth tangent fields on the sphere without sources and sinks using the polynomial Weyl algebra $R=\R[x,y,z]\langle \partial_x, \partial_y, \partial_z\rangle$.
	I.e., we are interested in $\sol_A(\F)=\{v\in C^\infty(S^2,\R^3)\mid Av=0\}$ for
	\begin{align*}
	A:=\begin{bmatrix}
	x & y & z \\
	\partial_x & \partial_y & \partial_z \\    	
	\end{bmatrix}\mbox{.}
	\end{align*}
	The right nullspace
	\begin{align*}
	B:=\begin{bmatrix}
	-z\partial_y+y\partial_z\\
	z\partial_x-x\partial_z\\
	-y\partial_x+x\partial_y\\
	\end{bmatrix}\mbox{.}
	\end{align*}
	can be checked to yield a parametrization of $\sol_\F(A)$.
	For a demonstration of this covariance functions, see Figure~\ref{figure_example_sphere}.
		
	\end{example}

\section{Boundary conditions}\label{section_boundary}

Differential equations and boundary conditions go hand in hand in applications.
Here, we recall a general methods to incorporate boundary conditions into Gaussian processes, a slight generalization of \citep[Section~3]{NoisyLinearOperatorEquationsGP}, closely related to vertical rescaling.
Boundary conditions in ODEs are equivalent to conditioning on data points \cite{GOODE}.

%

We recall the creation of priors for homogeneous boundary conditions for PDEs from \cite{NoisyLinearOperatorEquationsGP}, for the inhomogeneous case see Section~\ref{section_inhomogeneous}.
Such boundary conditions fix the function values and/or their derivatives at a subset of the domain $X$ exactly.
We restrict ourselves to zero sets of polynomials.
For more complicated, approximate boundary conditions see \cite{solin2019know} and for asymptotic boundaries see \cite{tan2018gaussian}.

Denote again by $\F=C^\infty(X,\R)$ the set of smooth functions defined on $X\subset\R^d$ compact.
Let $R'\subset \R^{X}$ be a Noetherian ring of functions and subring of $R$, and $M\subseteq X$ implicitely defined 
\begin{align*}
M=\V(I):=\left\{m\in X\,\middle|\,f(m)=0\mbox{ for all }f\in I\right\}
\end{align*}
for an ideal $I\trianglelefteq R'$ of equations.
An important example for this setting is the Weyl algebra $R=\R[x_1,\ldots,x_d]\langle \partial_{x_1},\ldots,\partial_{x_d}\rangle$ and its subring $R'$ the polynomial ring $R'=\R[x_1,\ldots,x_d]$.

\begin{proposition}\label{prop_boundary}
	A row  $B'=\begin{bmatrix}f_1 & \ldots & f_{\ell''}\end{bmatrix}$ whose entries generate the ideal $I$ parametrizes all solutions of a homogenous boundary condition $f_{|M}=0$ for a function $f\in\F$ via 
$\F\xleftarrow{B'}\F^{\ell''\times1}$
\end{proposition}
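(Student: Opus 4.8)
The plan is to show the set equality $\{f \in \F \mid f_{|M} = 0\} = B'\F^{\ell''\times 1}$, where the right-hand side is the image of the operator $\begin{bmatrix}f_1 & \cdots & f_{\ell''}\end{bmatrix}$ acting by $(h_1,\ldots,h_{\ell''})^T \mapsto \sum_{i=1}^{\ell''} f_i h_i$ (multiplication in $R' \subseteq R$, with the result viewed in $\F$). The inclusion $\subseteq$ (every solution of the boundary condition is parametrized) is the substantive direction; the inclusion $\supseteq$ (everything in the image satisfies the boundary condition) is the easy direction and I would dispatch it first.

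First, for $\supseteq$: take any $h = (h_1,\ldots,h_{\ell''})^T \in \F^{\ell''\times 1}$ and set $f = \sum_i f_i h_i$. Since each $f_i \in I$ and $I$ is an ideal of $R'$, for every $m \in M = \V(I)$ we have $f_i(m) = 0$, hence $f(m) = \sum_i f_i(m) h_i(m) = 0$, so $f_{|M} = 0$. Thus $B'\F^{\ell''\times 1} \subseteq \{f \in \F \mid f_{|M} = 0\}$.

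The direction $\subseteq$ is where the real content lies. Given $f \in \F$ with $f_{|M} = 0$, I must produce smooth functions $h_1,\ldots,h_{\ell''}$ with $f = \sum_i f_i h_i$ on $X$. The key geometric input is that $M = \V(I)$ and $I = (f_1,\ldots,f_{\ell''})$, so $M$ is exactly the common zero locus of the generators; a function smooth on the compact set $X$ and vanishing on $M$ should be expressible as an $\F$-linear combination of the $f_i$. The cleanest way to argue this is to invoke a smooth (real-analytic-flavored) division/Nullstellensatz-type result: on a compact set, if a smooth function vanishes on the common zero set of finitely many polynomials generating an ideal $I$, then — under the mild regularity implicit in the paper's setup (so that $M$ is a nice subvariety and the $f_i$ cut it out with the expected behavior, e.g. $\V(I)$ is a manifold and $I$ is radical, or the relevant Łojasiewicz inequality holds) — it lies in the ideal $I\F = \F f_1 + \cdots + \F f_{\ell''}$ of the ring $\F$. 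Concretely, one covers $X$ by charts, uses a local smooth division argument near $M$ (flattening $M$ and applying Hadamard's lemma / Taylor division with smooth remainder) and uses that away from $M$ some $f_i$ is nonzero so one can divide locally, then glues the local solutions with a smooth partition of unity subordinate to a finite subcover — here compactness of $X$ is essential. This produces the desired $h_i \in \F$.

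The main obstacle is precisely this last point: the passage from "vanishes on $\V(I)$" to "lies in the ideal generated by the $f_i$ over $\F$" is a nontrivial analytic statement (a smooth Nullstellensatz), false in full generality without hypotheses such as $I$ being radical and $M$ being well-behaved, and it is the step the authors presumably intend the reader to supply via the partition-of-unity / local division technique sketched above, consistent with the paper's recurring use of partitions of unity (Example~\ref{example_partition_unity}) to model boundary behavior. I would flag that the precise regularity assumptions on $I$ and $M$ guaranteeing this are those tacitly in force here (zero sets of polynomials with $X$ compact), and that the gluing step is where compactness of $X$ is used. Everything else — the ideal property, the easy inclusion, and the identification of the image with $B'\F^{\ell''\times 1}$ — is routine.
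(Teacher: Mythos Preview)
Your approach matches the paper's: the easy inclusion is identical, and for the hard inclusion the paper likewise divides locally by a nonvanishing $f_j$ (writing $h = B'\cdot\frac{h}{f_j}e_j$ on a neighborhood where $f_j\neq 0$) and then patches the local representations with a partition of unity. You are in fact more careful than the paper's own proof, which does not explicitly treat the behavior near $M$ or flag the regularity hypotheses you correctly identify as needed for the smooth-Nullstellensatz/Hadamard step; the paper simply says ``patch the local parametrizations via a partition of unity'' and stops.
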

\begin{proof}
	Let on the one hand $p\in\R^d$ such that $f_i(p)=0$ for all $1\le i\le \ell''$.
	Then, $(B'g)(p)=0$ for all $g\in \F^{\ell''\times1}$.
	On the other hand, let $p\in\R^d$ such that there is an $1\le j\le \ell''$ with  $f_j(p)\not=0$ and parametrize $h\in\F$ locally as $h(x)=B'\cdot \frac{h(x)}{f_j(x)}e_j$
	for $e_j$ the $j$th standard basis vector, since locally $f_j(x)\not=0$.
	For a global parametrization, patch the local parametrizations via a partition of unity.
\end{proof}

To encode boundary conditions for $\ell>1$ functions, we use a direct sum matrix $B'\in (R')^{\ell\times\ell\ell''}$, e.g.,  $B'=\begin{bmatrix}B'_1&0\\0&B'_2\end{bmatrix}$ for $\ell=2$ where $B_1'$ and $B_2'$ are rows over $R'$ describing the boundaries. 

\begin{example}\label{example_dirichlet}
	Functions $\F=C^\infty([0,1]^2,\R)$ with Dirichlet boundary conditions 
	$
	f(0,y)=f(1,y)=f(x,0)=f(x,1)=0
	$
	are parametrized by 
	$B'=\begin{bmatrix} x(x-1)y(y-1) \end{bmatrix}$.
\end{example}

\begin{example}
	Functions $\F=C^\infty(\R^3,\R)$ with boundary condition $f(0,0,z)=0$ are parametrized by 
	$B'=\begin{bmatrix} x & y \end{bmatrix}$.
\end{example}


\begin{example}\label{example_boundary_derivative}
	Consider $\F=C^\infty(\R^2,\R)$ with boundary conditions
	$f(0,y)=\left(\frac{\partial}{\partial x}f(x,y)\right)_{|x=0}=0\mbox{.}$
	Such functions are parametrized by $B=\begin{bmatrix} x^2 \end{bmatrix}$, since
	\begin{align*}
		&\left(\frac{\partial}{\partial x}(x^2f(x,y))\right)_{|x=0}\\
		=& 
		\left(2xf(x,y)+x^2\frac{\partial}{\partial x}f(x,y)\right)_{|x=0}
		=
		0.
	\end{align*}
\end{example}

\section{Intersecting parametrizations}\label{section_boundary_combination}

Now, we combine parametrizations $B_1\in R^{\ell\times\ell''}$ and $B_2\in R^{\ell\times\ell'''}$, e.g.\ from differential equations and boundary conditions, by intersecting their images $B_1\F^{\ell''}\cap B_2\F^{\ell'''}$.

\begin{example}
	Actually, the Dirichlet boundary condition of Example~\ref{example_dirichlet} is an intersection of the images of the boundary conditions parametrized by $\begin{bmatrix} x \end{bmatrix}$, $\begin{bmatrix} x-1 \end{bmatrix}$, $\begin{bmatrix} y \end{bmatrix}$, and $\begin{bmatrix} y-1 \end{bmatrix}$.
\end{example}

The following theorem is the main contribution of this paper.
It constructs a parametrization of intersections of parametrizations algorithmically.
\begin{theorem}[Intersecting parametrizations]\label{theorem_combining_parametrizations}
	Let $B_1\in R^{\ell\times\ell_1''}$ and $B_2\in R^{\ell\times\ell_2''}$.
	Denote by 
	\begin{align*}
	C:=\begin{bmatrix} C_1 \\ C_2\end{bmatrix}\in R^{(\ell_1''+\ell_2'')\times m}
	\end{align*}
	the right-nullspace of the matrix $B:=\begin{bmatrix} B_1 & B_2\end{bmatrix}\in R^{\ell\times(\ell_1''+\ell_2'')}$.
	Then $B_1C_1=-B_2C_2$ parametrizes solutions of $B_1\F^{\ell_1''}\cap B_2\F^{\ell_2''}$.
\end{theorem}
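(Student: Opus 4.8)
The plan is to show the two set inclusions $B_1 C_1 \F^m \subseteq B_1\F^{\ell_1''}\cap B_2\F^{\ell_2''}$ and $B_1\F^{\ell_1''}\cap B_2\F^{\ell_2''}\subseteq B_1 C_1 \F^m$, using that $C$ is a \emph{parametrization} (not merely a nullspace) of the solution set of $B$ in the sense of Theorem~\ref{theorem_parametrizable}; that is, $C\F^m = \sol_\F(B)= \{ v\in\F^{(\ell_1''+\ell_2'')\times 1}\mid Bv=0\}$. Strictly, I should take $C$ to be the right nullspace of $B$ and pass to the largest parametrizable subsystem as in Theorem~\ref{theorem_parametrizable}; I would add a sentence noting that $B_1\F^{\ell_1''}\cap B_2\F^{\ell_2''}$ is itself the image of an operator, hence controllable, so no loss occurs — or simply state the theorem for the parametrizable part and rely on Corollary~\ref{corollary_parametrizable}. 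The identity $B_1 C_1 = -B_2 C_2$ is immediate from $BC = B_1 C_1 + B_2 C_2 = 0$.

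For the first inclusion, take any $h\in\F^{m\times 1}$. Then $B_1 C_1 h \in B_1\F^{\ell_1''}$ trivially, and $B_1 C_1 h = -B_2 C_2 h \in B_2\F^{\ell_2''}$, so $B_1 C_1 h$ lies in the intersection. For the reverse inclusion, suppose $u = B_1 f_1 = B_2 f_2$ for some $f_1\in\F^{\ell_1''\times 1}$, $f_2\in\F^{\ell_2''\times 1}$. Set $v := \begin{bmatrix} f_1 \\ -f_2\end{bmatrix}\in\F^{(\ell_1''+\ell_2'')\times 1}$. Then $B v = B_1 f_1 - B_2 f_2 = u - u = 0$, so $v\in\sol_\F(B) = C\F^m$ by the parametrization property, i.e.\ $v = C h$ for some $h\in\F^{m\times 1}$. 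Reading off the first block, $f_1 = C_1 h$, hence $u = B_1 f_1 = B_1 C_1 h \in B_1 C_1\F^m$. This closes the argument.

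The only genuine subtlety — the step I expect to be the main obstacle — is the use of the word ``parametrizes'' rather than merely ``has image contained in'': I must be careful that $C$ really surjects onto all of $\sol_\F(B)$, not just a subset, since the reverse inclusion crucially produces an element $v\in\sol_\F(B)$ and needs a preimage under $C$. This is exactly where Theorem~\ref{theorem_parametrizable} (equivalently, controllability of the intersection via Theorem~\ref{theorem_controllable_parametrizable}) is invoked, and where the hypothesis should really read ``let $C$ be a parametrization of the largest controllable subsystem of $\sol_\F(B)$.'' Everything else is formal block-matrix bookkeeping over the $R$-module structure of $\F$; no analysis or topology is needed beyond what Lemma~\ref{lemma_gp_operator} and Proposition~\ref{proposition_denseparametrization} already supply for the subsequent Gaussian process statements.
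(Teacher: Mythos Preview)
Your argument is correct and arrives at the same conclusion, but the route differs from the paper's. You prove the two inclusions by direct element-chasing: $B_1C_1\F^m\subseteq B_1\F^{\ell_1''}\cap B_2\F^{\ell_2''}$ is immediate from $BC=0$, and for the converse you note that $u=B_1f_1=B_2f_2$ yields $v=\begin{bmatrix}f_1\\-f_2\end{bmatrix}\in\sol_\F(B)$ and then invoke $\sol_\F(B)=C\F^m$ to produce a preimage. The paper instead casts the same content in homological-algebra language: the intersection is the image in $\F^\ell$ of the pullback of the cospan $\F^{\ell_1''}\xrightarrow{B_1}\F^\ell\xleftarrow{B_2}\F^{\ell_2''}$, and $C$ is obtained as a free cover of that pullback; the passage from the nullspace computation over $R$ to the statement over $\F$ is effected by the contravariant functor $\hom_R(-,\F)$, which sends the pushout (with free hull $C$) over $R$ to the pullback over $\F$. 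Your version is more elementary and self-contained; the paper's version explains \emph{why} a purely symbolic computation over $R$ suffices and isolates the precise module-theoretic hypothesis---injectivity of $\F$, mentioned in a footnote---needed for surjectivity.

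One small caution on your parenthetical justification: saying ``$B_1\F^{\ell_1''}\cap B_2\F^{\ell_2''}$ is itself the image of an operator, hence controllable'' does not by itself close the gap, because what your reverse inclusion actually needs is that $\sol_\F(B)$ (not the intersection) is parametrized by its right nullspace $C$. That is exactly the injectivity of $\F$ as an $R$-module, or equivalently the controllability check of Corollary~\ref{corollary_parametrizable} applied to $B$. The paper handles this the same way you propose in your second alternative, so your diagnosis of where the real work lies is correct; just be sure to point it at the right object.
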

For the proof cf.\ the appendix. 
The computations are again Gr\"obner basis computations over the ring $R$.

\begin{example}\label{example_sphere2b}
	We rephrase the computation of divergence free fields on the sphere from Example~\ref{example_sphere2}.
	This is the intersection of divergence free fields, the zero set of $A_1:=\begin{bmatrix}
	\partial_x & \partial_y & \partial_z \\    	
	\end{bmatrix}$, 
	and the fields on the sphere, the zero set of
	$A_2:=\begin{bmatrix}
	x & y & z \\    	
	\end{bmatrix}$, 
	respectively parametrized by	
	\begin{align*}
	B_1=\begin{bmatrix}
	0 & \partial_z & -\partial_y \\
	-\partial_z & 0 & \partial_x \\
	\partial_y & -\partial_x & 0 \\
	\end{bmatrix}
	\text{and }
	B_2=\begin{bmatrix}
	0 & z & -y \\
	-z & 0 & x \\
	y & -x & 0 \\
	\end{bmatrix}.
	\end{align*}
	
	The right-nullspace of $\begin{bmatrix}B_1 & B_2\end{bmatrix}$ is
	\begin{align*}
	C=
	\begin{bmatrix}
	C_1\\C_2
	\end{bmatrix}
	=
	\begin{bmatrix}
	x&\partial_x&0\\
	y&\partial_y&0\\
	z&\partial_z&0\\
	\partial_x&0&x\\
	\partial_y&0&y\\
	\partial_z&0&z	
	\end{bmatrix}
	\end{align*}
	The matrix $\begin{bmatrix}B_1 & B_2\end{bmatrix}$ is the left nullspace of $C$.
	Now, 
	\begin{align*}B_1C_1=-B_2C_2=
	\begin{bmatrix}
	z\partial_y-y\partial_z&0&0\\ 
	-z\partial_x+x\partial_z&0&0\\
	y\partial_x-x\partial_y&0&0
	\end{bmatrix}
	\end{align*}
	is equivalent\footnote{The matrices $B_1$ and $B_2$ each have a non-zero nullspace, corresponding to the two trivial columns in $B_1C_1$.} to the matrix $B$ from Example~\ref{example_sphere2}.	
\end{example}


\begin{example}\label{example_sphere_boundary}
	We continue with the divergence free fields on the sphere from Examples~\ref{example_sphere2} and \ref{example_sphere2b}.
	These are parametrized by
	\begin{align*}
	B_1:=\begin{bmatrix}
	-z\partial_y+y\partial_z\\
	z\partial_x-x\partial_z\\
	-y\partial_x+x\partial_y\\
	\end{bmatrix}\mbox{ .}
	\end{align*}
	functions vanishing at the equator (boundary condition: $f(x,y,0)=0$) are parametrized by 
	\begin{align*}
	B_2:=\begin{bmatrix}
	z & 0 & 0 \\
	0 & z & 0 \\
	0 & 0 & z \\
	\end{bmatrix}\mbox{ .}
	\end{align*}
	The nullspace of $\begin{bmatrix}B_1 & B_2\end{bmatrix}$ is
	\begin{align*}
	C:=
	\begin{bmatrix}
	C_1\\C_{2,1}\\C_{2,2}\\C_{2,3}
	\end{bmatrix}
	=		
	\begin{bmatrix}
	-{z}^{2}\\
	z^2\partial_y-yz\partial_z-2y\\
	-z^2\partial_x+xz\partial_z+2\,x\\
	yz\partial_x-xz\partial_y
	\end{bmatrix}.
	\end{align*}
	The left nullspace of $C$ is not only generated by $\begin{bmatrix}B_1 & B_2\end{bmatrix}$, but by the additional relation
	$
	D:=
	\begin{bmatrix}
	0 & x & y & z
	\end{bmatrix}\mbox{.}
	$
	This relation $D$ tells us, that the parametrized solutions of $C_2$ are a vector field on a sphere around the origin, which they remain after being multiplied by the scalar matrix $B_2$.
	We gladly accept this additional condition.
	Now, 
	\begin{align*}
	B_1C_1=
	-B_2C_2=
	\begin{bmatrix}
	-z^3\partial_y+yz^2\partial_z+2yz\\
	z^3\partial_x-xz^2\partial_z-2xz \\
	-yz^2\partial_x+xz^2\partial_y\end{bmatrix}
	\end{align*}
	parametrizes the divergence free fields on the sphere vanishing at the equator, see Figure~\ref{figure_example_sphere3}.
\end{example}

\begin{remark}\label{remark_graepl}
	\citep{NoisyLinearOperatorEquationsGP} also constructs a Gaussian process prior for a system $Af=y$ of linear differential equations with boundary conditions.
	It assumes any Gaussian process prior on $f$ and uses a variant of Lemma~\ref{lemma_pushforward_gaussian} to compute the cross-covariance between $y$ and $f$, which allows to condition the model $p(f)$ on data for $y$.
	This model ensures in no way that $f$ is constrained to solutions of $Af=y$, even if e.g.\ $y=0$ is known.
	Furthermore, conditioning $p(f)$ on data for $f$ is just done w.r.t.\ the (uninformative) Gaussian process prior chosen for $f$.
	
	As in this paper, \citep{NoisyLinearOperatorEquationsGP} uses Proposition~\ref{prop_boundary} and a pushforward to construct a prior for $f$ supported on solutions of the homogeneous boundary condition.
	No effort to combine differential equations and boundary conditions as in Theorem~\ref{theorem_combining_parametrizations} is necessary, since the differential equations are not satisfied anyway.
	The case of inhomogeneous boundary conditions is solved via taking a particular solutions as a mean function.
	Finding such a particular solution is simple, as only the boundary conditions must be satisfied; in contrast to Section~6 of this paper, where also the differential equations need to be satisfied.
	

\end{remark}

\section{Inhomogenous boundary conditions}\label{section_inhomogeneous}

So far, we have only considered homogeneous equations and boundary conditions, i.e., with right hand sides zero.
The fundamental theorem of homomorphisms (cf.\ Lemma~\ref{lemma_gp_operator}) extends this to the inhomogeneous case, by taking a particular solution as mean function.
While simple theoretically, finding a particular solution can be quite hard in practice.
We restrict ourselves to examples.

\begin{figure*}
	\centering
	\centerline{
		
		\begin{minipage}{0.33\textwidth}
			\vspace{1.0em}
			\includegraphics[width=\textwidth]{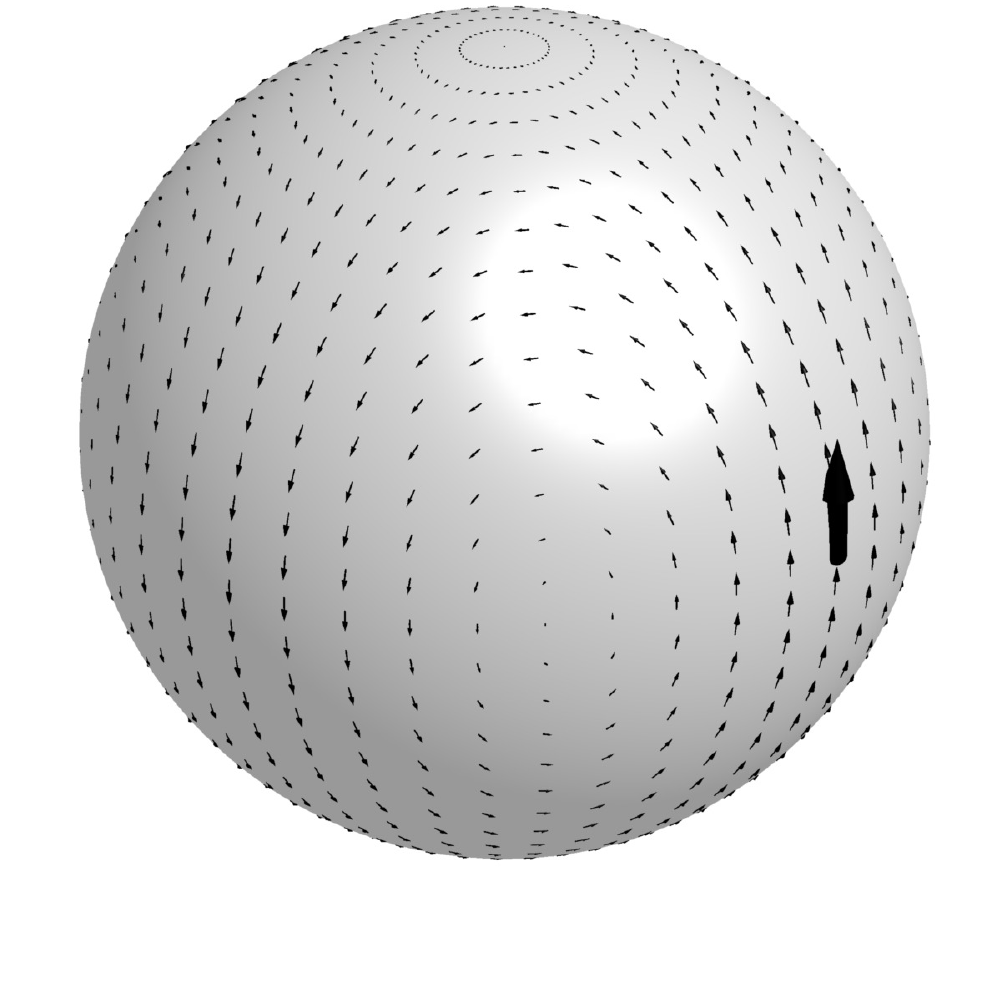}
		\end{minipage}
		\begin{minipage}{0.33\textwidth}
			\vspace{-2.0em}
			\includegraphics[width=\textwidth]{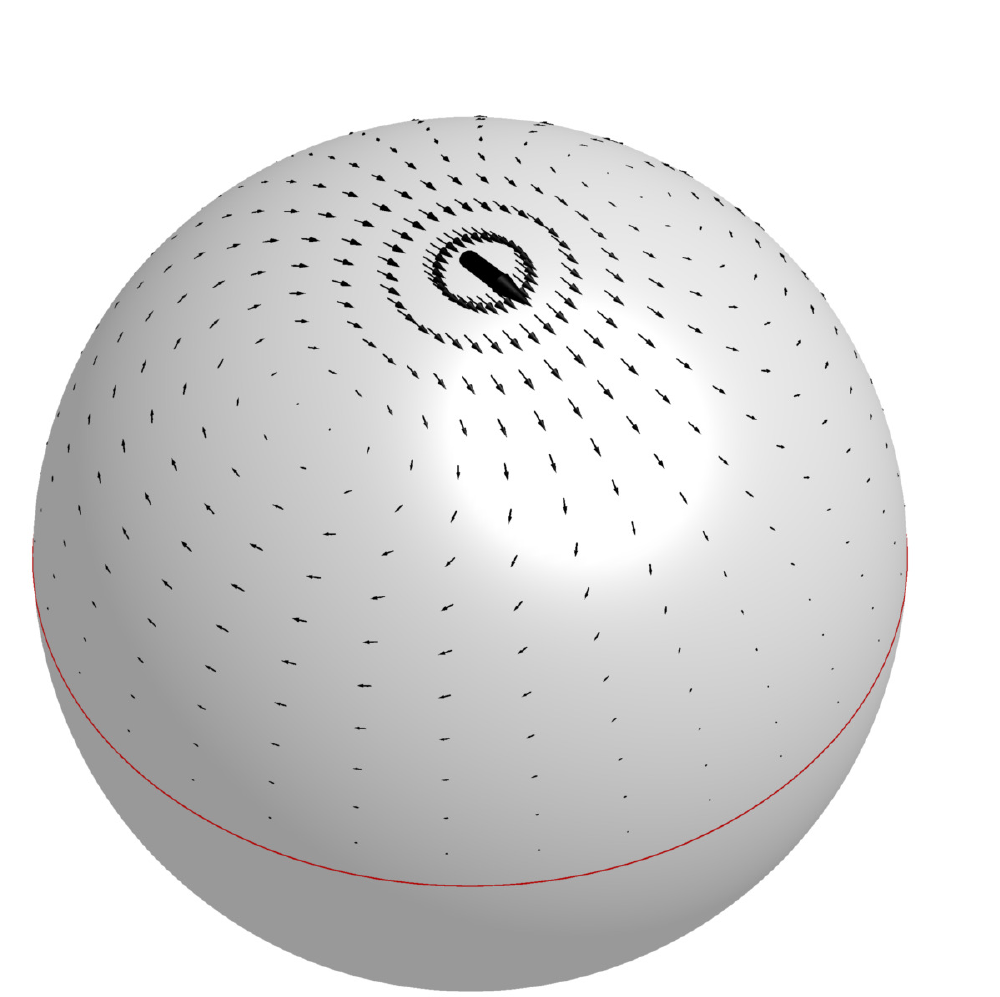}
		\end{minipage}
		\begin{minipage}{0.33\textwidth}
			\vspace{-2.0em}
			\includegraphics[width=\textwidth]{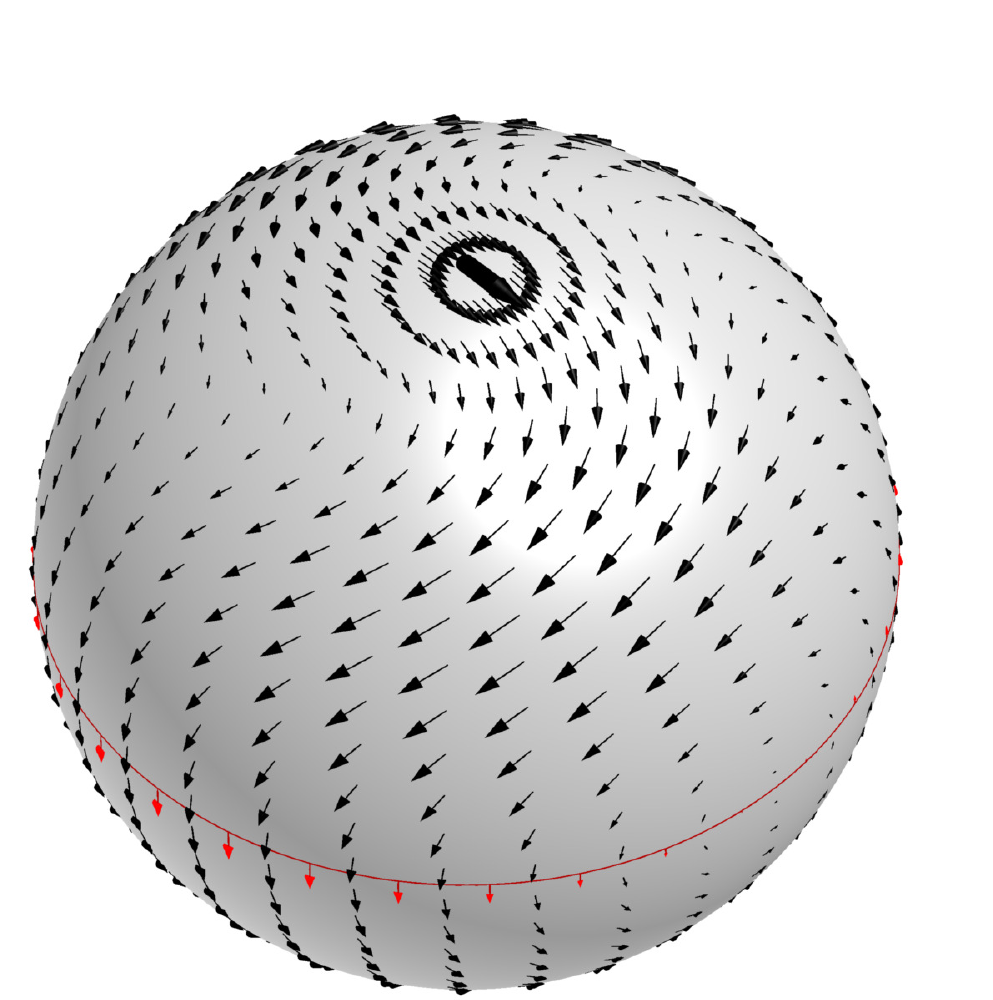}
		\end{minipage}
	}
	\vspace{-1.5em}
	\caption{
		On the left, the posterior mean of conditioning the prior in Example~\ref{example_sphere2} at two opposite points on the equator with tangent vectors pointing north.
		Without sources and sinks, the tangent vectors flow south away from the data.
		In the middle, the posterior mean from Example~\ref{example_sphere_boundary} of a divergence free tangent field on the sphere which is zero at the equator (red) and conditioned at a \emph{single} observation at the north pole. Notice the flow parallel to the equator in middle latitudes, orthogonal to the observation, avoids sinks or sources.
		On the right, the posterior mean from Example~\ref{example_sphere_boundary_inhomogeneous} of a divergence free tangent field on the sphere with the given boundary condition (red) at the equator being conditioned at a single observation at the north pole.
		Data is displayed artificially bigger.
	}
	\label{figure_example_sphere}
	\label{figure_example_sphere3}
	\label{figure_example_sphere_boundary_inhomogeneous}
\end{figure*}

\begin{example}\label{example_sphere_boundary_inhomogeneous}
	
	Consider smooth divergence free fields on the 2-sphere $X=S^2$, i.e., $f\in\F^{3\times 1}$ with 
	\begin{align*}
	Af=\begin{bmatrix}x&y&z\\\partial_x&\partial_y&\partial_z\end{bmatrix}f=0
	\end{align*}
	and inhomogeneous boundary condition $f_3(x,y,0)=y$.
	
	The function $\mu=\begin{bmatrix}0&-z&y\end{bmatrix}^T$ is a particular solution.
	Hence, we take it as mean function.
	The matrix $B_1C_1=-B_2C_2$ from Example~\ref{example_sphere_boundary}
	parametrizes functions with the corresponding homogeneous boundary condition $f_3(x,y,0)=0$ of functions vanishing at the equator.
	
	Hence, assuming mean zero and squared exponential covariance $k_\F$,
	the Gaussian process
	$\GP\left( \mu, (B_1C_1)k_\F((B_1C_1)')^T \right)$
	is a prior distribution in the solutions of the equations and boundary conditions by Lemma~\ref{lemma_gp_operator}, which we demonstrate in Figure~\ref{figure_example_sphere_boundary_inhomogeneous}.
	
\end{example}	

\begin{figure}
\centering
\centerline{
	\def\len{0.025}
	\def\heada{1.5}
	\def\headb{1}
	\begin{tikzpicture}[scale=5]
	\draw[gray,dotted] (0,0) -- (1,0) -- (1,1) -- (0,1) -- (0,0);
	\foreach \x in {0.0 ,0.05,...,1.001} {
		\foreach \y in {0.0 ,0.05,...,1.001} {
			\pgfmathsetmacro{\vx}{
				1.+(16*(1.+((-3.5)+(1.+(2.5-1.*\y)*\y)*\y)*\y)*exp(-.5*(\x-1)*\x-.5*(\y-1)*\y-.25)+(16*((-2.)+(5.+(.5+((-3.5)+1.*\y)*\y)*\y)*\y)*exp(-.5*(\x-1)*\x-.5*(\y-1)*\y-.25)+16*(1.+((-1.5)+((-1.5)+1.*\y)*\y)*\y)*exp(-.5*(\x-1)*\x-.5*(\y-1)*\y-.25)*\x)*\x)*\x
			}
			\pgfmathsetmacro{\vy}{
				-16*(1.+((-2.)+1.*\y)*\y+((-3.5)+(5.-1.5*\y)*\y+(1.+(.5-1.5*\y)*\y+(2.5+((-3.5)+1.*\y)*\y+(1.*\y-1.)*\x)*\x)*\x)*\x)*exp(-.5*(\x-1)*\x-.5*(\y-1)*\y-.25)*\y
			}
			\draw[-{Latex[length=\heada,width=\headb]}]  (\x,\y) -- (\x+\len*\vx, \y+\len*\vy);
		}
	}
	\draw[-{Latex[length=2*\heada,width=2*\headb]},red,line width=2] (0.5,0.5) -- (0.5,0.5+\len*2);
	\end{tikzpicture}
}
\label{figure_example_plane}
\vspace{-0.5em}
\caption{
	A plot of the model from Example~\ref{example_plane}, conditioned on the vector $(0,1)$ at the point $(0.5,0.5)$, which is plotted artificially bigger and red.
}
\end{figure}
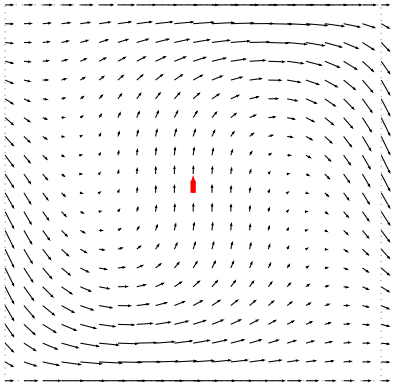

\begin{example}\label{example_plane}
	Consider smooth divergence free fields on the square $X=[0,1]\times[0,1]$ such that no flow in or out of $X$ is possible at the lower and upper boundary of $X$ and there is a constant flow of strength $1$ in $x$-direction at the left and right boundary.
	The divergence-freeness is modelled by the right kernel
	\begin{align*}
		B_1=\begin{bmatrix} \partial_y \\ -\partial_x \end{bmatrix}
	\end{align*}
	of $A=\begin{bmatrix} \partial_x & \partial_y \end{bmatrix}$.
	We model the conditions on the flow by the constant mean function
	\begin{align*}
		\mu:(x,y)\mapsto\begin{bmatrix} 1 \\ 0 \end{bmatrix}
	\end{align*}
	describing flow in $x$-direction and the boundary condition parametrized by
	\begin{align*}
		B_2=\begin{bmatrix}x(x-1) & 0 \\ 0 & y(y-1) \end{bmatrix}.
	\end{align*}
	The nullspace of $\begin{bmatrix}B_1 & B_2\end{bmatrix}$ is
	\begin{align*}
		C:=
		\begin{bmatrix}
		C_1\\C_{2,1}\\C_{2,2}
		\end{bmatrix}
		=		
		\begin{bmatrix}
			x^2y^2-x^2y-xy^2+xy\\
			-y^2\partial_y+y\partial_y-2y+1\\
			x^2\partial_x-x\partial_x+2x-1
		\end{bmatrix}.
	\end{align*}
	and leads to the parametrization
	\begin{align*}
		&P:=B_1C_1=
		-B_2C_2\\
		=&
		\begin{bmatrix}
			x(x-1)(-1+y^2\partial_y+y(-\partial_y+2))\\
			-y(y-1)(-1+x^2\partial_x+x(-\partial_x+2))
		\end{bmatrix}.
	\end{align*}
	Hence, assuming a squared exponential covariance $k_\F$ for the parametrizing function 	the Gaussian process
	\begin{align*}
		\GP\left( \mu, Pk_\F P'^T \right)
	\end{align*}
	is a prior of smooth divergence free fields on $X$ with the above flow conditions.
	We demonstrate this prior in Figure~\ref{figure_example_plane}.
\end{example}

\section{Conclusion}

This paper incorporates prior knowledge into machine learning frameworks.
It presents a novel framework to 
\begin{enumerate}
	\item describe parametrizations for boundary conditions, 
	\item combine parametrizations by intersecting their images, and
	\item build Gaussian process priors with realizations in the solution set of a system of linear differential equations with boundary conditions,
\end{enumerate}	
without any assumptions or approximations.
These priors have been demonstrated on geometric problems and lead to reasonable models with one or two (cf.\ Figure~\ref{figure_example_sphere}) data points.

The author thanks the reviewers  for their constructive feedback and by is interested in further work on encoding physical or system-theoretic properties in Gaussian process priors.

\onecolumn

\appendix

\section{Proof of Lemma~2.2}

Before giving the proof of 
Lemma~2.2, 
we recall the definition (if it exists) of the $\ell$-th cumulant function $\kappa_\ell(g)$
\begin{align*}
\kappa_\ell(g)\left(x^{(1)},\ldots,x^{(\ell)}\right)
=
\sum_{\pi\in \operatorname{part}(\ell)}(-1)^{|\pi|-1}(|\pi|-1)!
\prod_{\tau\in\pi}E\left(\prod_{i\in \tau}g\left(x^{(i)}\right)\right)
\end{align*}
of a stochastic process $g$, where $\operatorname{part}(\ell)$ is the set of partitions of $\ell$ and $|\pi|$ denotes the cardinality of $\pi$.
In particular, the first two cumulant functions $\kappa_1$ resp.\ $\kappa_2$ are equal to the mean resp.\ covariance function.
Furthermore, $g$ is Gaussian iff all but the first two cumulant functions vanish.

The stochastic process $B_*g$ exists, as $\F$ is an $R$-module and the realizations of $g$ are all contained in $\F$.
The compatibility with expectations proves the following formula for the cumulant functions of $\kappa(B_*g)$ of $B_*g$, where $B^{(i)}$ denotes the operation of $B$ on functions with argument $x^{(i)}\in\R^d$:
\begin{align*}
&\kappa_\ell(B_*g)\left(x^{(1)},\ldots,x^{(\ell)}\right)\\
&=\sum_{\pi\in \operatorname{part}(\ell)}(-1)^{|\pi|-1}(|\pi|-1)!\cdot
\prod_{\tau\in\pi}E\left(\prod_{i\in \tau}(B_*g)\left(x^{(i)}\right)\right)\\
&=\sum_{\pi\in \operatorname{part}(\ell)}(-1)^{|\pi|-1}(|\pi|-1)!\cdot
\prod_{\tau\in\pi}\left(\prod_{i\in \tau}B^{(i)}\right)E\left(\prod_{i\in \tau}g\left(x^{(i)}\right)\right)\\
&\hspace{4em}\mbox{(as $B$ commutes with expectation)}\\
&=\sum_{\pi\in \operatorname{part}(\ell)}(-1)^{|\pi|-1}(|\pi|-1)!\cdot
\;\widehat{B}\;\prod_{\tau\in\pi}E\left(\prod_{i\in \tau}g\left(x^{(i)}\right)\right)\\
&\hspace{4em}\mbox{(as $\pi$ is a partition; $\widehat{B}:=\prod_{i\in \{1,\ldots,\ell\}}B^{(i)}$)}\\
&=\widehat{B}\sum_{\pi\in \operatorname{part}(\ell)}(-1)^{|\pi|-1}(|\pi|-1)!\cdot
\prod_{\tau\in\pi}E\left(\prod_{i\in \tau}g\left(x^{(i)}\right)\right)\\
&\hspace{4em}\mbox{(as $B$ is linear)}\\
&=\widehat{B}\,\kappa_\ell(g)\left(x^{(1)},\ldots,x^{(\ell)}\right)
\end{align*}
As $g$ is Gaussian, the higher ($\ell\ge3$) cumulants $\kappa_\ell(g)$ vanish, hence the higher ($\ell\ge3$) cumulants $\kappa_\ell(B_*g)$ vanish, which implies that $B_*g$ is Gaussian.
The formulas for the mean function resp.\ covariance function follow from the above computation for $\ell=1$ resp.\ $\ell=2$.
\qed

\section{Proof of Theorem~5.2}

Before giving the proof of 
Theorem~5.2,
we recall some definitions and facts from homological algebra and category theory \cite{nlab,MLCWM,weihom,ce}.
A collection of two morphisms with the same source $A_2\xleftarrow{\alpha_1} B\xrightarrow{\alpha_2} A_2$ is  a \emph{span} and a collection of two morphisms with the same range $C_2\xrightarrow{\gamma_1} D\xleftarrow{\gamma_2} C_2$ is a \emph{cospan}.
Given a cospan $C_2\xrightarrow{\gamma_1} D\xleftarrow{\gamma_2} C_2$, an object $P$ together with two morphisms $\delta_1:P\to C_1$ and $\delta_2:P\to C_2$ is called a \emph{pullback}, if $\gamma_1\circ\delta_1=\gamma_2\circ\delta_2$ and for every $P'$ with two morphisms $\delta_1':P'\to C_1$ and $\delta_2':P'\to C_2$ such that $\gamma_1\circ\delta_1'=\gamma_2\circ\delta_2'$ there exists a unique morphism $\pi:P'\to P$ such that $\delta_1\circ\pi=\delta_1'$ and $\delta_2\circ\pi=\delta_2'$.
Pullbacks are the generalization of intersections.
Given a span $A_2\xleftarrow{\alpha_1} B\xrightarrow{\alpha_2} A_2$, an object $P$ together with two morphisms $\beta_1:A_1\to P$ and $\beta_2:A_2\to P$ is called a \emph{pushout}, if $\beta_1\circ\alpha_1=\beta_2\circ\alpha_2$ and for every $P'$ with two morphisms $\beta_1':A_1\to P'$ and $\beta_2':A_2\to P'$ such that  $\beta_1'\circ\alpha_1=\beta_2'\circ\alpha_2$ there exists a unique morphism $\pi:P\to P'$ such that $\beta_1\circ\pi=\beta_1'$ and $\beta_2\circ\pi=\beta_2'$.
Pullbacks and Pushouts exist in the category of finitely presented modules.
Given an $R$-module $M$, an epimorphism $M\xleftarrowdbl{}\R^m$ is a \emph{free cover} of $M$ and a monomorphism $M\hookrightarrow\R^m$ is a \emph{free hull} of $M$.
Every finitely presented $R$-module has a free cover, but only a free hull iff it corresponds to a controllable system.
Given an $R$-module $M$, the \emph{contravariant hom-functor} $\hom_R(-,M)$ is the hom-set $\hom_R(A,M)=\{\psi:A\to M\mid \psi\ R\text{-module homomorphism}\}$ when applied to an $R$-module $A$ and application to an $R$-module homomorphism $\varphi:A\to B$ gives $\hom_R(\varphi,M): \hom_R(B,M)\to \hom_R(A,M): \beta\mapsto \beta\circ\varphi$.
If $R$ is a commutative, then $\hom_R(-,M)$ is a functor to the category of $R$-modules, otherwise it is a functor to the category of Abelian groups.

By 
Corollary~3.7,
the assumptions of 
Theorem~5.2
ensure that we have a parametrization $C$ of the system defined by $B$.
As $C$ is the nullspace of $B$, we have $B_1C_1=-B_2C_2$.

The parametrization of an intersection of parametrizations  $B_1\F^{\ell_1''}\cap B_2\F^{\ell_2''}$ is given by the image of the pullback $P$ of the cospan $\F^{\ell_1''}\xrightarrow{B_1}\F^{\ell}\xleftarrow{B_2}\F^{\ell_2''}$ in $\F^{\ell}$ by \citep[15.10.8.a]{eis}.
The approach of
Theorem~5.2
computes a subset\footnote{To get the full image, we need $\F$ to be an injective module.} of this image via a free cover $P\xleftarrowdbl{}\F^m$ of this pullback $P$ as image of $B_1C_1=-B_2C_2$, as depicted in the following commutative diagram:
\begin{center}
	\begin{tikzpicture}[node distance=0.5cm]]
	\node (im) {$\F^{\ell}$};
	\node[above right = of im] (s1)  {$\F^{\ell_1''}$};
	\node[below right = of im] (s2)  {$\F^{\ell_2''}$};
	\node[above right = of s2] (p)  {$P$};
	\node[right = of p] (c)  {$\F^m$};
	\draw[->] (s1) -- node[above left]{$B_1$} (im);
	\draw[->] (s2) -- node[below left]{$B_2$} (im);
	\draw[->] (p) -- (s1);
	\draw[->] (p) -- (s2);
	\draw[->>] (c) -- (p);
	\draw[->] (c) -- node[above right]{$C_1$} (s1);
	\draw[->] (c) -- node[below right]{$-C_2$} (s2);
	\end{tikzpicture}
\end{center}
As in 
Theorem~3.6
and
Corollary~3.7,
the computation is done dually over the ring $R$.
There, the cospan $R^{1\times\ell_1''}\xrightarrow{C_1}R^{1\times m}\xleftarrow{C_2}R^{1\times\ell_2''}$ defines a free hull $Q\lhook\joinrel\xrightarrow{C}R^{1\times m}$ of the pushout $Q$ of the span $R^{1\times\ell_1''}\xleftarrow{B_1}R^{1\times\ell}\xrightarrow{B_2}R^{1\times\ell_2''}$.
Then applying the dualizing hom-functor $\hom_R(-,\F)$ transforms this to the function space $\F$.
\qed

Even though all operations in this proof are algorithmic \citep{BL}, 
Theorem~5.2
describes a computationally more efficient algorithm.

\section{Code}\label{appendix_code}

The following computation have been performed in Maple with the OreModules package \citep{CQR07}.

\begin{example}[General Code for GP regression]
	\phantom{test}

	\begin{maplegroup}
		\begin{mapleinput}
			\mapleinline{active}{1d}{# code for GP regression
				GP:=proc(Kf,
				\ points,yy,epsilon)
				local n,m,kf,K,s1,s2,alpha,KStar;
				\ n:=nops(points);
				\ m:=RowDimension(Kf);
				\ s1:=map(
				\ \ a->[x1=a[1],y1=a[2],z1=a[3]],
				\ \ points);
				\ s2:=map(
				\ \ a->[x2=a[1],y2=a[2],z2=a[3]],
				\ \ points);
				\ kf:=convert(Kf,listlist);
			}{}
		\end{mapleinput}
		\begin{mapleinput}
			\mapleinline{active}{1d}{
				\ K:=convert(
				\ \ evalf(
				\ \ \ map(
				\ \ \ \ a->map(
				\ \ \ \ \ b->convert(
				\ \ \ \ \ \ subs(a,subs(b,kf)),
				\ \ \ \ \ \ Matrix),
				\ \ \ \ \ s2),
				\ \ \ \ s1)),
				\ \ Matrix):
				\ alpha:=yy.(K+epsilon\symbol{94}2)\symbol{94}(-1);
				\ KStar:=map(
				\ \ a->subs(a,kf),
				\ \ s1):
				\ KStar:=subs(
				\ \ [x2=x,y2=y,z2=z],KStar):
				\ KStar:=convert(
				\ \ map(op,KStar),Matrix):
				\ return alpha.KStar;
				end:
			}{}
		\end{mapleinput}
	\end{maplegroup}
	
\end{example}

\begin{example}[Code for Example~\makeatletter \ifcsdef{r@example_sphere2}{\ref{example_sphere2}}{3.9}\makeatother]
\phantom{test}

	\begin{maplegroup}
		\begin{mapleinput}
			\mapleinline{active}{1d}{restart;
			}{}
		\end{mapleinput}
	\end{maplegroup}
	\begin{maplegroup}
		\begin{mapleinput}
			\mapleinline{active}{1d}{with(OreModules):
			}{}
		\end{mapleinput}
	\end{maplegroup}
	\begin{maplegroup}
		\begin{mapleinput}
			\mapleinline{active}{1d}{with(LinearAlgebra):
			}{}
		\end{mapleinput}
	\end{maplegroup}
	\begin{maplegroup}
		\begin{mapleinput}
			\mapleinline{active}{1d}{Alg:=DefineOreAlgebra(diff=[Dx,x],
				\ diff=[Dy,y], diff=[Dz,z],
				\ diff=[Dx1,x1], diff=[Dy1,y1],
				\ diff=[Dz1,z1], diff=[Dx2,x2],
				\ diff=[Dy2,y2], diff=[Dz2,z2],
				\ polynom=[x,y,z,x1,x2,y1,y2,z1,z2]):}{}
		\end{mapleinput}
	\end{maplegroup}
	\begin{maplegroup}
		\begin{mapleinput}
			\mapleinline{active}{1d}{A:=<<x,Dx>|<y,Dy>|<z,Dz>>;
			}{}
		\end{mapleinput}
	 \[A:=\left[ \begin {array}{ccc} x&y&z\\ \noalign{\medskip}{\it Dx}&{\it Dy
	}&{\it Dz}\end {array} \right] \]
	\end{maplegroup}
	\begin{maplegroup}
		\begin{mapleinput}
			\mapleinline{active}{1d}{# combine
				B:=Involution(
				\ SyzygyModule(
				\ \ Involution(A,Alg),
				\ \ Alg),
				\ Alg);
			}{}
		\end{mapleinput}
		\[B:=\left[ \begin {array}{c} z{\it Dy}-{\it Dz}\,y\\ \noalign{\medskip}-{
			\it Dx}\,z+{\it Dz}\,x\\ \noalign{\medskip}{\it Dx}\,y-{\it Dy}\,x
		\end {array} \right]\]
	\end{maplegroup}
	\begin{maplegroup}
		\begin{mapleinput}
			\mapleinline{active}{1d}{# check parametrization
				A1:=SyzygyModule(B,Alg):
				ReduceMatrix(A,A1,Alg);
				ReduceMatrix(A1,A,Alg);
			}{}
		\end{mapleinput}
		\[[]\]
		\[[]\]
	\end{maplegroup}
	\begin{maplegroup}
		\begin{mapleinput}
			\mapleinline{active}{1d}{# covariance for
				# parametrizing function
				SE:=exp(-1/2*(x1-x2)\symbol{94}2
				\ -1/2*(y1-y2)\symbol{94}2-1/2*(z1-z2)\symbol{94}2):
			}{}
		\end{mapleinput}
	\end{maplegroup}
	\begin{maplegroup}
		\begin{mapleinput}
			\mapleinline{active}{1d}{
				Kg:=unapply(
				\ DiagonalMatrix([SE]),
				\ (x1,y1,z1,x2,y2,z2)):
			}{}
		\end{mapleinput}
	\end{maplegroup}
	\begin{maplegroup}
		\begin{mapleinput}
			\mapleinline{active}{1d}{
				# prepare covariance
				P2:=ApplyMatrix(B,
				\ [xi(x,y,z)], Alg):
				P2:=convert(P2,list):
			}{}
		\end{mapleinput}
	\end{maplegroup}
	\smallskip
	\begin{maplegroup}
		\begin{mapleinput}
			\mapleinline{active}{1d}{
				l1:=[x=x1,y=y1,z=z1,
				\ Dx=Dx1,Dy=Dy1,Dz=Dz1]:
			}{}
		\end{mapleinput}
	\end{maplegroup}
	\begin{maplegroup}
		\begin{mapleinput}
			\mapleinline{active}{1d}{
				l2:=[x=x2,y=y2,z=z2,
				\ Dx=Dx2,Dy=Dy2,Dz=Dz2]:
			}{}
		\end{mapleinput}
	\end{maplegroup}
	\smallskip
	\begin{maplegroup}
		\begin{mapleinput}
			\mapleinline{active}{1d}{# construct covariance
				# apply from one side
				Kf:=convert(
				\ map(
				\ \ b->subs(
				\ \ \ [xi(x1,y1,z1)=b[1]],
				\ \ \ subs(l1,P2)),
				\ \ convert(
				\ \ \ Kg(x1,y1,z1,x2,y2,z2),
				\ \ \ listlist)),
				\ Matrix):
				# apply from other side
				Kf:=convert(
				\ expand(
				\ \ map(
				\ \ \ b->subs(
				\ \ \ \ [xi(x2,y2,z2)=b[1]],
				\ \ \ \ subs(l2,P2)),
				\ \ \ convert(
				\ \ \ \ Transpose(Kf),
				\ \ \ \ listlist))),
				\ Matrix):
			}{}
		\end{mapleinput}
	\end{maplegroup}
	\smallskip
	\begin{maplegroup}
		\begin{mapleinput}
			\mapleinline{active}{1d}{
				gp:=unapply(
				\ evalf(convert(
				\ \ GP(Kf,
				\ \ \ [[1,0,0],[-1,0,0]],
				\ \ \ <<0>|<0>|<1>|<0>|<0>|<1>>,
				\ \ \ 1e-5),
				\ \ list)),
				\ (x,y,z)):
			}{}
		\end{mapleinput}
	\end{maplegroup}
	\begin{maplegroup}
		\begin{mapleinput}
			\mapleinline{active}{1d}{gp(x,y,z):
				factor(simplify(
			}{}
		\end{mapleinput}
		\begin{align*}
			&0.7015\,\\
			&[ 
			\,z \left(-\,{{\rm e}^{x- 0.5\,x^2- 0.5\,y^2- 0.5\,z^2}}
			+{{\rm e}^{-x- 0.5\,x^2- 0.5\,y^2- 0.5\,z^2}} \right),\\
			&yz \left({{\rm e}^{x- 0.5\,x^2- 0.5\,y^2- 0.5\,z^2}}
			+{{\rm e}^{- 1.0\,x- 0.5\,x^2- 0.5\,y^2- 0.5\,z^2}} \right),\\
			&-y^2{{\rm e}^{x- 0.5\,x^2- 0.5\,y^2- 0.5\,z^2}}
			+x{{\rm e}^{x- 0.5\,x^2-0.5\,y^2- 0.5\,z^2}}\\
			&-y^2{{\rm e}^{-x- 0.5\,x^2-0.5\,y^2- 0.5\,z^2}}
			-x{{\rm e}^{-x- 0.5\,x^2- 0.5\,y^2- 0.5\,z^2}}
			]
		\end{align*}
	\end{maplegroup}

\end{example}

\begin{example}[Code for Example~Code for Example~\makeatletter \ifcsdef{r@example_ode_boundary}{\ref{example_ode_boundary}}{4.1}\makeatother]
\phantom{test}


\begin{maplegroup}
	\begin{mapleinput}
		\mapleinline{active}{1d}{restart;with(LinearAlgebra):
		}{}
	\end{mapleinput}
\end{maplegroup}
\begin{maplegroup}
	\begin{mapleinput}
		\mapleinline{active}{1d}{k:=(x,y)->exp(-1/2*(x-y)\symbol{94}2);
		}{}
	\end{mapleinput}
	\mapleresult
	\[k:=( {x,y} )\mapsto {{\rm e}^{-1/2\, \left( x-y \right) ^{2}}}\]
\end{maplegroup}
\begin{maplegroup}
	\begin{mapleinput}
		\mapleinline{active}{1d}{K:=<
			\ <k(0,0),subs(x=0,diff(k(x,0),x)),
			\ \ k(1,0),subs(x=1,diff(k(x,0),x))>|
			\ <subs(y=0,diff(k(0,y),y)),
			\ \ subs([x=0,y=0],diff(k(x,y),x,y)),
			\ \ subs(y=0,diff(k(1,y),y)),
			\ \ subs([x=1,y=0],diff(k(x,y),x,y))>|
			\ <k(0,1),subs(x=0,diff(k(x,1),x)),
			\ \ k(1,1),subs(x=1,diff(k(x,1),x))>|
			\ <subs(y=1,diff(k(0,y),y)),
			\ \ subs([x=0,y=1],diff(k(x,y),x,y)),
			\ \ subs(y=1,diff(k(1,y),y)),
			\ \ subs([x=1,y=1],diff(k(x,y),x,y))>
			>:
		}{}
	\end{mapleinput}
	\begin{mapleinput}
		\mapleinline{active}{1d}{
			K:=simplify(K);
		}{}
	\end{mapleinput}
	\[ K:=\left[ \begin {array}{cccc} 1&0&{{\rm e}^{-1/2}}&-{{\rm e}^{-1/2}}
	\\ \noalign{\medskip}0&1&{{\rm e}^{-1/2}}&0\\ \noalign{\medskip}{
		{\rm e}^{-1/2}}&{{\rm e}^{-1/2}}&1&0\\ \noalign{\medskip}-{{\rm e}^{-1
			/2}}&0&0&1\end {array} \right] 
	\]
\end{maplegroup}
\begin{maplegroup}
	\begin{mapleinput}
		\mapleinline{active}{1d}{# posterior covariance
		}{}
	\end{mapleinput}
	\smallskip
	\begin{mapleinput}
		\mapleinline{active}{1d}{
			K_star:=unapply(
			<<k(x,0)>|
			\ <subs(y=0,diff(k(x,y),y))>|
			\ <k(x,1)>|
			\ <subs(y=1,diff(k(x,y),y))>>,x):
		}{}
	\end{mapleinput}
	\begin{mapleinput}
		\mapleinline{active}{1d}{
			K_inv:=simplify(K\symbol{94}(-1)):
		}{}
	\end{mapleinput}
	\begin{mapleinput}
	\mapleinline{active}{1d}{
		d:=denom(K_inv[1,1]):
	}{}
	\end{mapleinput}
	\begin{mapleinput}
	\mapleinline{active}{1d}{
		K_inv_d:=simplify(d*K_inv):
	}{}
	\end{mapleinput}
	\smallskip
	\begin{mapleinput}
	\mapleinline{active}{1d}{
		1/d*simplify(
		\ (<<d*k(x,y)>>
		\ \ -K_star(x).K_inv_d.
		\ \ Transpose(K_star(y)))[1,1]
		);
	}{}
	\end{mapleinput}
	\begin{align*}
	&{\rm e}^{-\frac12(x-y)^2}
	-{\frac{{\rm e}^{-\frac12x^2-\frac12y^2}}{{{\rm e}^{-2}}-3\,{{\rm e}^{-1}}+1}}\cdot\\
	&\Big(
	(xy-x-y+2)	{{\rm e}^{x+y-1}}
	+(xy+1)\\
	& 	+(-2xy+x+y-1) \left({{\rm e}^{x+y-2}}+{{\rm e}^{-1}}\right)\\
	& 	+(xy-y+1)		{{\rm e}^{y-2}}
	+(xy-x+1)		{{\rm e}^{x-2}}\\
	& 	+(y-x-2) 		{{\rm e}^{y-1}}
	+(x-y-2) 		{{\rm e}^{x-1}}	
	\Big),
	\end{align*}
\end{maplegroup}
\begin{maplegroup}
	\begin{mapleinput}
		\mapleinline{active}{1d}{}{}
	\end{mapleinput}
\end{maplegroup}

\end{example}

\begin{example}[Code for Example~\makeatletter \ifcsdef{r@example_sphere2b}{\ref{example_sphere2b}}{5.3}\makeatother]
	
	
	\phantom{test}
	
	\begin{maplegroup}
		\begin{mapleinput}
			\mapleinline{active}{1d}{
				restart;
			}{}
		\end{mapleinput}
		\begin{mapleinput}
		\mapleinline{active}{1d}{
			with(OreModules):
		}{}
		\end{mapleinput}
		\begin{mapleinput}
		\mapleinline{active}{1d}{
			with(LinearAlgebra):
		}{}
		\end{mapleinput}
	\end{maplegroup}
	\smallskip
	\begin{maplegroup}
		\begin{mapleinput}
		\mapleinline{active}{1d}{
			Alg:=DefineOreAlgebra(
			\ diff=[Dx,x], diff=[Dy,y],
			\ diff=[Dz,z], polynom=[x,y,z]):
		}{}
		\end{mapleinput}
	\end{maplegroup}
	\begin{maplegroup}
		\begin{mapleinput}
			\mapleinline{active}{1d}{
				A1:=<<Dx>|<Dy>|<Dz>>;
			}{}
		\end{mapleinput}
	\[
	A\!\mathit{1}:= \left[ \begin {array}{ccc} {\it Dx}&{\it Dy}&{\it Dz}\end {array}
	\right] 
	\]
		\begin{mapleinput}
			\mapleinline{active}{1d}{
				B1:=Involution(
				\ SyzygyModule(
				\ \ Involution(A1,Alg),
				\ \ Alg),
				\ Alg):
				# reorder columns
				B1:=B1.<<0,0,-1>|<1,0,0>|<0,-1,0>>;
			}{}
		\end{mapleinput}
	\[
	B\!\mathit{1}:=\left[ \begin {array}{ccc} {\it Dz}&{\it Dy}&0\\ \noalign{\medskip}0&
	-{\it Dx}&{\it Dz}\\ \noalign{\medskip}-{\it Dx}&0&-{\it Dy}
	\end {array} \right] \]
	
	\end{maplegroup}
	\begin{maplegroup}
		\begin{mapleinput}
			\mapleinline{active}{1d}{
				A2:=<<x>|<y>|<z>>;
			}{}
		\end{mapleinput}
		\[A\!\mathit{2}:=\left[ \begin {array}{ccc} x&y&z\end {array} \right] \]
		\begin{mapleinput}
			\mapleinline{active}{1d}{
				B2:=Involution(
				\ SyzygyModule(
				\ \ Involution(A2,Alg),
				\ \ Alg),
				\ Alg):
				# reorder columns
				B2:=B2.<<0,0,1>|<-1,0,0>|<0,1,0>>;
			}{}
		\end{mapleinput}
		\[B\!\mathit{2}:=\left[ \begin {array}{ccc} 0&z&-y\\ \noalign{\medskip}-z&0&x
		\\ \noalign{\medskip}y&-x&0\end {array} \right] \]
		
	\end{maplegroup}
	\begin{maplegroup}
		\begin{mapleinput}
			\mapleinline{active}{1d}{{\small MinimalParametrizations(<B1|B2>,Alg):}
				C:=
				# Normalize Columns
				C:=C.DiagonalMatrix([-1,-1,1]);
			}{}
		\end{mapleinput}
		\[C:=\left[ \begin {array}{ccc} x&{\it Dx}&0\\ \noalign{\medskip}y&{\it Dy
		}&0\\ \noalign{\medskip}z&{\it Dz}&0\\ \noalign{\medskip}{\it Dx}&0&x
		\\ \noalign{\medskip}{\it Dy}&0&y\\ \noalign{\medskip}{\it Dz}&0&z
		\end {array} \right] 
		\]
	\end{maplegroup}
	\begin{maplegroup}
		\begin{mapleinput}
			\mapleinline{active}{1d}{#check, parametrization:
				BB:=SyzygyModule(C,Alg):
				ReduceMatrix(BB,<B1|B2>,Alg);
				ReduceMatrix(<B1|B2>,BB,Alg);
			}{}
		\end{mapleinput}
		\[[]\]
		\[[]\]
	\end{maplegroup}
	\begin{maplegroup}
		\begin{mapleinput}
			\mapleinline{active}{1d}{
				# B1*C1
				BB1:=Mult(B1,C[1..3,1..3],Alg);
			}{}
		\end{mapleinput}
	\[BB\!\mathit{1}:=\left[ \begin {array}{ccc} -z{\it Dy}+{\it Dz}\,y&0&0
	\\ \noalign{\medskip}{\it Dx}\,z-{\it Dz}\,x&0&0\\ \noalign{\medskip}-
	{\it Dx}\,y+{\it Dy}\,x&0&0\end {array} \right] 
	\]
		\begin{mapleinput}
		\mapleinline{active}{1d}{ 
				# -B2*C2
				BB2:=-Mult(B2,C[4..6,1..3],Alg);
				}{}
		\end{mapleinput}
	\[BB\!\mathit{2}:=\left[ \begin {array}{ccc} -z{\it Dy}+{\it Dz}\,y&0&0
	\\ \noalign{\medskip}{\it Dx}\,z-{\it Dz}\,x&0&0\\ \noalign{\medskip}-
	{\it Dx}\,y+{\it Dy}\,x&0&0\end {array} \right] 
	\]
	\end{maplegroup}
	\begin{maplegroup}
		\begin{mapleinput}
			\mapleinline{active}{1d}{#For comparison:
				B_old:=<<y*Dz-z*Dy,
				\ -x*Dz+z*Dx,-y*Dx+x*Dy>>;
				
			}{}
		\end{mapleinput}
	\[ B\_old:=\left[ \begin {array}{c} -z{\it Dy}+{\it Dz}\,y\\ \noalign{\medskip}{
		\it Dx}\,z-{\it Dz}\,x\\ \noalign{\medskip}-{\it Dx}\,y+{\it Dy}\,x
	\end {array} \right] 
	\]
	\end{maplegroup}

\end{example}

\begin{example}[Code for Example~\makeatletter \ifcsdef{r@example_sphere_boundary}{\ref{example_sphere_boundary}}{5.4}\makeatother]
	\phantom{test}

	\begin{maplegroup}
		\begin{mapleinput}
			\mapleinline{active}{1d}{restart;
			}{}
		\end{mapleinput}
	\end{maplegroup}
	\begin{maplegroup}
		\begin{mapleinput}
			\mapleinline{active}{1d}{with(Janet):
			}{}
		\end{mapleinput}
	\end{maplegroup}
	\begin{maplegroup}
		\begin{mapleinput}
			\mapleinline{active}{1d}{with(OreModules):
			}{}
		\end{mapleinput}
	\end{maplegroup}
	\begin{maplegroup}
		\begin{mapleinput}
			\mapleinline{active}{1d}{with(LinearAlgebra):
			}{}
		\end{mapleinput}
	\end{maplegroup}
	\begin{maplegroup}
		\begin{mapleinput}
			\mapleinline{active}{1d}{with(plots):
			}{}
		\end{mapleinput}
	\end{maplegroup}
	\smallskip
	\begin{maplegroup}
		\begin{mapleinput}
			\mapleinline{active}{1d}{Alg:=DefineOreAlgebra(diff=[Dx,x],
				\ diff=[Dy,y], diff=[Dz,z],
				\ diff=[Dx1,x1], diff=[Dy1,y1],
				\ diff=[Dz1,z1], diff=[Dx2,x2],
				\ diff=[Dy2,y2], diff=[Dz2,z2],
				\ polynom=[x,y,z,x1,x2,y1,y2,z1,z2]):
			}{}
		\end{mapleinput}
	\end{maplegroup}
	\smallskip
	\begin{maplegroup}
		\begin{mapleinput}
			\mapleinline{active}{1d}{
				# div-free fields on S\symbol{94}2
			}{}
		\end{mapleinput}
		\begin{mapleinput}
			\mapleinline{active}{1d}{
				B1:=<<y*Dz-z*Dy,-x*Dz+z*Dx,-y*Dx+x*Dy>>:
			}{}
		\end{mapleinput}
	\end{maplegroup}
	\smallskip
	\begin{maplegroup}
		\begin{mapleinput}
			\mapleinline{active}{1d}{
				# parametrize equator=0
				B2:=DiagonalMatrix([z\$3]);
			}{}
		\end{mapleinput}
		\[B\!\mathit{2}:=\left[ \begin {array}{ccc} z&0&0\\ \noalign{\medskip}0&z&0\\ \noalign{\medskip}0&0&z\end {array} \right] \]
	\end{maplegroup}
	\begin{maplegroup}
		\begin{mapleinput}
			\mapleinline{active}{1d}{# combine
				B:=<B1|B2>:
				C:=Involution(
				\ SyzygyModule(
				\ \ Involution(B,Alg),
				\ \ Alg),
				\ Alg);
			}{}
		\end{mapleinput}
		\[C:=\left[ \begin {array}{c} {z}^{2}\\ \noalign{\medskip}{\it Dy}\,{z}^{2
			}-{\it Dz}\,yz-2\,y\\ \noalign{\medskip}-{\it Dx}\,{z}^{2}+x{\it Dz}\,
			z+2\,x\\ \noalign{\medskip}{\it Dx}\,yz-{\it Dy}\,xz\end {array}
			\right] 
			\]
	\end{maplegroup}
	\begin{maplegroup}
		\begin{mapleinput}
				\mapleinline{active}{1d}{
					# check parametrization
					BB:=SyzygyModule(C,Alg):
					ReduceMatrix(B,BB,Alg);
					}{}
		\end{mapleinput}
		\[[]\]
		\begin{mapleinput}
			\mapleinline{active}{1d}{
				ReduceMatrix(BB,B,Alg);
				# new relation!}{}
		\end{mapleinput}
		\[\left[ \begin {array}{cccc} 0&x&y&z\end {array} \right]\]
	\end{maplegroup}
	\begin{maplegroup}
		\begin{mapleinput}
			\mapleinline{active}{1d}{
				# the new parametrization
				P:=Mult(B1,[[C[1,1]]],Alg);
			}{}
		\end{mapleinput}
		\[P:= \left[ \begin {array}{c} z \left( -{\it Dy}\,{z}^{2}+{\it Dz}\,yz+2\,
		y \right) \\ \noalign{\medskip}z \left( {\it Dx}\,{z}^{2}-x{\it Dz}\,z
		-2\,x \right) \\ \noalign{\medskip} \left( -{\it Dx}\,y+{\it Dy}\,x
		\right) {z}^{2}\end {array} \right]
		\]
	\end{maplegroup}
	\begin{maplegroup}
		\begin{mapleinput}
			\mapleinline{active}{1d}{# sanity check = P
				-Mult(B2,C[2..4,1..1],Alg);}{}
		\end{mapleinput}
		\[\left[ \begin {array}{c} z \left( -{\it Dy}\,{z}^{2}+{\it Dz}\,yz+2\,y \right) \\ \noalign{\medskip}z \left( {\it Dx}\,{z}^{2}-x{\it Dz}\,z-2\,x \right) \\ \noalign{\medskip} \left( -{\it Dx}\,y+{\it Dy}\,x\right) {z}^{2}\end {array} \right]
			\]
	\end{maplegroup}
	\begin{maplegroup}
		\begin{mapleinput}
			\mapleinline{active}{1d}{
				# covariance for
				# parametrizing function
				SE:=exp(-1/2*(x1-x2)\symbol{94}2
				\ -1/2*(y1-y2)\symbol{94}2-1/2*(z1-z2)\symbol{94}2):
			}{}
		\end{mapleinput}
		\begin{mapleinput}
				\mapleinline{active}{1d}{
					Kg:=unapply(
					\ DiagonalMatrix([SE]),
					\ (x1,y1,z1,x2,y2,z2)):
				}{}
		\end{mapleinput}
	\end{maplegroup}
	\begin{maplegroup}
		\begin{mapleinput}
			\mapleinline{active}{1d}{
				# prepare covariance
				P2:=ApplyMatrix(P,
				\ [xi(x,y,z)], Alg):
				P2:=convert(P2,list):
			}{}
		\end{mapleinput}
	\end{maplegroup}
	\begin{maplegroup}
		\begin{mapleinput}
			\mapleinline{active}{1d}{
				l1:=[x=x1,y=y1,z=z1,
				\ Dx=Dx1,Dy=Dy1,Dz=Dz1]:
			}{}
		\end{mapleinput}
	\end{maplegroup}
	\begin{maplegroup}
		\begin{mapleinput}
			\mapleinline{active}{1d}{
				l2:=[x=x2,y=y2,z=z2,
				\ Dx=Dx2,Dy=Dy2,Dz=Dz2]:
			}{}
		\end{mapleinput}
	\end{maplegroup}
	\begin{maplegroup}
		\begin{mapleinput}
			\mapleinline{active}{1d}{
				# construct covariance
				# apply from one side
				Kf:=convert(
				\ map(
				\ \ b->subs(
				\ \ \ [xi(x1,y1,z1)=b[1]],
				\ \ \ subs(l1,P2)),
				\ \ convert(
				\ \ \ Kg(x1,y1,z1,x2,y2,z2),
				\ \ \ listlist)),
				\ Matrix):
			}{}
		\end{mapleinput}
	\end{maplegroup}
	\begin{maplegroup}
		\begin{mapleinput}
			\mapleinline{active}{1d}{
				# apply from other side
				Kf:=convert(
				\ expand(
				\ \ map(
				\ \ \ b->subs(
				\ \ \ \ [xi(x2,y2,z2)=b[1]],
				\ \ \ \ subs(l2,P2)),
				\ \ \ convert(
				\ \ \ \ Transpose(Kf),
				\ \ \ \ listlist))),
				\ Matrix):
			}{}
		\end{mapleinput}
	\end{maplegroup}
	\begin{maplegroup}
		\begin{mapleinput}
			\mapleinline{active}{1d}{
				gp:=unapply(
				\ piecewise(z<0,[0,0,0],
				\ evalf(convert(
				\ \ GP(Kf,[[0,0,1]],<1|0|0>,1e-5),
				\ \ list))),
				\ (x,y,z)):
			}{}
		\end{mapleinput}
	\end{maplegroup}
	\begin{maplegroup}
		\begin{mapleinput}
			\mapleinline{active}{1d}{
				gp(x,y,z) assuming z>0:
				factor(simplify(
			}{}
		\end{mapleinput}
		\begin{align*}
			[
			- 0.6065\,z(-z^2+zy^2+ 2y^2)\,{{\rm e}^{- 0.5\,{x}^{2}- 0.5\,{y}^{2}+z-0.5\,{z}^{2}}},\\
			0.6065\,xyz(z+2)\,{{\rm e}^{- 0.5\,{x}^{2}- 0.5\,{y}^{2}+z- 0.5\,{z}^{2}}} ,\\
			- 0.6065\,xz^2\,{{\rm e}^{- 0.5\,{x}^{2}- 0.5\,{y}^{2}+z- 0.5\,{z}^{2}}}
			]
		\end{align*}
	\end{maplegroup}
	
\end{example}

\begin{example}[Code for Example~\makeatletter \ifcsdef{r@example_sphere_boundary_inhomogeneous}{\ref{example_sphere_boundary_inhomogeneous}}{6.1}\makeatother]
	\phantom{test}

	\begin{maplegroup}
		\begin{mapleinput}
			\mapleinline{active}{1d}{restart;
			}{}
		\end{mapleinput}
	\end{maplegroup}
	\begin{maplegroup}
		\begin{mapleinput}
			\mapleinline{active}{1d}{with(OreModules):
			}{}
		\end{mapleinput}
	\end{maplegroup}
	\begin{maplegroup}
		\begin{mapleinput}
			\mapleinline{active}{1d}{with(LinearAlgebra):
			}{}
		\end{mapleinput}
	\end{maplegroup}
	\begin{maplegroup}
		\begin{mapleinput}
			\mapleinline{active}{1d}{Alg:=DefineOreAlgebra(diff=[Dx,x],
				\ diff=[Dy,y], diff=[Dz,z],
				\ diff=[Dx1,x1], diff=[Dy1,y1],
				\ diff=[Dz1,z1], diff=[Dx2,x2],
				\ diff=[Dy2,y2], diff=[Dz2,z2],
				\ polynom=[x,y,z,x1,x2,y1,y2,z1,z2]):}{}
		\end{mapleinput}
	\end{maplegroup}
	\begin{maplegroup}
		\begin{mapleinput}
			\mapleinline{active}{1d}{B1:=<<y*Dz-z*Dy,-x*Dz+z*Dx,-y*Dx+x*Dy>>;}{}
		\end{mapleinput}
		\[B\!\mathit{1}:=\left[ \begin {array}{c} -z{\it Dy}+{\it Dz}\,y\\ \noalign{\medskip}z
		{\it Dx}-{\it Dz}\,x\\ \noalign{\medskip}-{\it Dx}\,y+{\it Dy}\,x
		\end {array} \right] 
		\]
	\end{maplegroup}
	\begin{maplegroup}
		\begin{mapleinput}
			\mapleinline{active}{1d}{mu:=<0,-z,y>;
			}{}
		\end{mapleinput}
		\[\mu:=\left[ \begin {array}{c} 0\\ \noalign{\medskip}-z
		\\ \noalign{\medskip}y\end {array} \right]
		\]
	\end{maplegroup}
	\begin{maplegroup}
		\begin{mapleinput}
			\mapleinline{active}{1d}{
				#check:
				A1:=Matrix(1,3,[[Dx,Dy,Dz]]):
				A2:=Matrix(1,3,[[x,y,z]]):
				ApplyMatrix(A1,mu,Alg);
				ApplyMatrix(A2,mu,Alg);
			}{}
		\end{mapleinput}
		\[[0]\]
		\[[0]\]
	\end{maplegroup}
	\begin{maplegroup}
		\begin{mapleinput}
			\mapleinline{active}{1d}{# the new parametrization
				P:=Mult(B1,[[z\symbol{94}2]],Alg);}{}
		\end{mapleinput}
		\[
			P:= \left[ \begin {array}{c} z \left( -{\it Dy}\,{z}^{2}+{\it Dz}\,yz+2\,y \right) \\ \noalign{\medskip}z \left( {\it Dx}\,{z}^{2}-x{\it Dz}\,z-2\,x \right) \\ \noalign{\medskip} \left( -{\it Dx}\,y+{\it Dy}\,x\right) {z}^{2}\end {array} \right]
		\]
	\end{maplegroup}
	\begin{maplegroup}
		\begin{mapleinput}
			\mapleinline{active}{1d}{# covariance for
				# parametrizing function
				SE:=exp(-1/2*(x1-x2)\symbol{94}2
				\ -1/2*(y1-y2)\symbol{94}2-1/2*(z1-z2)\symbol{94}2):
			}{}
		\end{mapleinput}
	\end{maplegroup}
	\smallskip
	\begin{maplegroup}
		\begin{mapleinput}
			\mapleinline{active}{1d}{
				Kg:=unapply(
				\ DiagonalMatrix([SE]),
				\ (x1,y1,z1,x2,y2,z2)):
			}{}
		\end{mapleinput}
	\end{maplegroup}
	\begin{maplegroup}
		\begin{mapleinput}
			\mapleinline{active}{1d}{
				# prepare covariance
				P2:=ApplyMatrix(P,
				\ [xi(x,y,z)], Alg):
				P2:=convert(P2,list):
			}{}
		\end{mapleinput}
	\end{maplegroup}
	\smallskip
	\begin{maplegroup}
		\begin{mapleinput}
			\mapleinline{active}{1d}{
				l1:=[x=x1,y=y1,z=z1,
				\ Dx=Dx1,Dy=Dy1,Dz=Dz1]:
			}{}
		\end{mapleinput}
	\end{maplegroup}
	\smallskip
	\begin{maplegroup}
		\begin{mapleinput}
			\mapleinline{active}{1d}{
				l2:=[x=x2,y=y2,z=z2,
				\ Dx=Dx2,Dy=Dy2,Dz=Dz2]:
			}{}
		\end{mapleinput}
	\end{maplegroup}
	\smallskip
	\begin{maplegroup}
		\begin{mapleinput}
			\mapleinline{active}{1d}{
				# construct covariance
				# apply from one side
				Kf:=convert(
				\ map(
				\ \ b->subs(
				\ \ \ [xi(x1,y1,z1)=b[1]],
				\ \ \ subs(l1,P2)),
				\ \ convert(
				\ \ \ Kg(x1,y1,z1,x2,y2,z2),
				\ \ \ listlist)),
				\ Matrix):
			}{}
		\end{mapleinput}
	\end{maplegroup}
	\begin{maplegroup}
		\begin{mapleinput}
			\mapleinline{active}{1d}{
				# apply from other side
				Kf:=convert(
				\ expand(
				\ \ map(
				\ \ \ b->subs(
				\ \ \ \ [xi(x2,y2,z2)=b[1]],
				\ \ \ \ subs(l2,P2)),
				\ \ \ convert(
				\ \ \ \ Transpose(Kf),
				\ \ \ \ listlist))),
				\ Matrix):
			}{}
		\end{mapleinput}
	\end{maplegroup}
	\begin{maplegroup}
		\begin{mapleinput}
			\mapleinline{active}{1d}{
				p:=[0,0,1]:
			}{}
		\end{mapleinput}
	\end{maplegroup}
	\smallskip
	\begin{maplegroup}
		\begin{mapleinput}
			\mapleinline{active}{1d}{
				mu_p:=Transpose(
				\ subs(
				\ \ [x=p[1],y=p[2],z=p[3]],
				\ \ mu)):
			}{}
		\end{mapleinput}
	\end{maplegroup}
	\smallskip
	\begin{maplegroup}
		\begin{mapleinput}
			\mapleinline{active}{1d}{
				gp:=unapply(
				\ factor(simplify(
				\ \ convert(
				\ \ \ GP(Kf,[p],<1|0|0>-mu_p,1e-5),
				\ \ \ list)))
				\ \ +convert(mu,list),
				\ (x,y,z)):
			}{}
		\end{mapleinput}
	\end{maplegroup}
	\begin{maplegroup}
		\begin{mapleinput}
			\mapleinline{active}{1d}{
				gp(x,y,z);
			}{}
		\end{mapleinput}
	
	\begin{align*}
	[
	- 0.6065\,z(-z^2+zy^2+ 2y^2)\,{{\rm e}^{- 0.5\,{x}^{2}- 0.5\,{y}^{2}+z-0.5\,{z}^{2}}},\\
	0.6065\,xyz(z+2)\,{{\rm e}^{- 0.5\,{x}^{2}- 0.5\,{y}^{2}+z- 0.5\,{z}^{2}}}-z ,\\
	- 0.6065\,xz^2\,{{\rm e}^{- 0.5\,{x}^{2}- 0.5\,{y}^{2}+z- 0.5\,{z}^{2}}}+y
	]
	\end{align*}
	\end{maplegroup}
			
\end{example}

\begin{example}[Code for Example~\makeatletter \ifcsdef{r@example_sphere_boundary_inhomogeneous}{\ref{example_sphere_boundary_inhomogeneous}}{6.2}\makeatother]
	\phantom{test}	
	
	\begin{maplegroup}
		\begin{mapleinput}
			\mapleinline{active}{1d}{restart;
			}{}
		\end{mapleinput}
	\end{maplegroup}
	\begin{maplegroup}
		\begin{mapleinput}
			\mapleinline{active}{1d}{with(OreModules):
			}{}
		\end{mapleinput}
	\end{maplegroup}
	\begin{maplegroup}
		\begin{mapleinput}
			\mapleinline{active}{1d}{with(LinearAlgebra):
			}{}
		\end{mapleinput}
	\end{maplegroup}
	\begin{maplegroup}
		\begin{mapleinput}
			\mapleinline{active}{1d}{Alg:=DefineOreAlgebra( 
			\ diff=[Dx,x], diff=[Dy,y],
			\ diff=[Dx1,x1], diff=[Dy1,y1],
			\ diff=[Dx2,x2], diff=[Dy2,y2],
			\ polynom=[x,y,x1,x2,y1,y2]):
			}{}
		\end{mapleinput}
	\end{maplegroup}
	\begin{maplegroup}
		\begin{mapleinput}
			\mapleinline{active}{1d}{A:=<<Dx>|<Dy>>;
			}{}
		\end{mapleinput}
		\mapleresult
		\begin{maplelatex}
			\[
			\left[ \begin {array}{cc} {\it Dx}&{\it Dy}\end {array} \right]
			\]
		\end{maplelatex}
	\end{maplegroup}
	\begin{maplegroup}
		\begin{mapleinput}
			\mapleinline{active}{1d}{B1:=Involution(
				\ SyzygyModule(
				\ \ Involution(A,Alg),
				\ \ Alg),
				\ Alg);
			}{}
		\end{mapleinput}
		\mapleresult
		\begin{maplelatex}
			\[
			\left[ \begin {array}{c} {\it Dy}\\ \noalign{\medskip}-{\it Dx}
			\end {array} \right] 
			\]
		\end{maplelatex}
	\end{maplegroup}
	\begin{maplegroup}
		\begin{mapleinput}
			\mapleinline{active}{1d}{mu:=<1,0>;
			}{}
		\end{mapleinput}
		\mapleresult
		\begin{maplelatex}
			\[
			\left[ \begin {array}{c} 1\\ \noalign{\medskip}0\end {array} \right]
			\]
		\end{maplelatex}
	\end{maplegroup}
	\begin{maplegroup}
		\begin{mapleinput}
			\mapleinline{active}{1d}{B2:=<<(x-1)*x,0>|<0,(y-1)*y>>;
			}{}
		\end{mapleinput}
		\mapleresult
		\begin{maplelatex}
			\[
			\left[ \begin {array}{cc}  \left( x-1 \right) x&0
			\\ \noalign{\medskip}0& \left( y-1 \right) y\end {array} \right] 
			\]
		\end{maplelatex}
	\end{maplegroup}
	\begin{maplegroup}
		\begin{mapleinput}
			\mapleinline{active}{1d}{# combine
				B:=<B1|B2>:
				C:=Involution(
				\ SyzygyModule(
				\ \ Involution(B,Alg),
				\ \ Alg),
				\ Alg);}{}
		\end{mapleinput}
		\mapleresult
		\begin{maplelatex}
			\[
			 \left[ \begin {array}{c} {x}^{2}{y}^{2}-{x}^{2}y-x{y}^{2}+xy
			\\ \noalign{\medskip}-{\it Dy}\,{y}^{2}+{\it Dy}\,y-2\,y+1
			\\ \noalign{\medskip}{\it Dx}\,{x}^{2}-{\it Dx}\,x+2\,x-1\end {array}
			\right] 
			\]
		\end{maplelatex}
	\end{maplegroup}
	\begin{maplegroup}
		\begin{mapleinput}
			\mapleinline{active}{1d}{# the new parametrization
				P:=Mult(B1,C[1,1],Alg);
			}{}
		\end{mapleinput}
		\mapleresult
		\begin{maplelatex}
			\[
			\left[ \begin {array}{c} x \left( -1+{\it Dy}\,{y}^{2}+ \left( -{\it 
				Dy}+2 \right) y \right)  \left( x-1 \right) \\ \noalign{\medskip}-
			\left( y-1 \right) y \left( -1+{\it Dx}\,{x}^{2}+ \left( -{\it Dx}+2
			\right) x \right) \end {array} \right] 
			\]
		\end{maplelatex}
	\end{maplegroup}
	\begin{maplegroup}
		\begin{mapleinput}
			\mapleinline{active}{1d}{# covariance for
				# parametrizing function
				SE:=exp(-1/2*(x1-x2)\symbol{94}2
				-1/2*(y1-y2)\symbol{94}2):
			}{}
		\end{mapleinput}
	\end{maplegroup}
	\begin{maplegroup}
		\begin{mapleinput}
			\mapleinline{active}{1d}{Kg:=unapply(
				DiagonalMatrix([SE]),
				(x1,y1,x2,y2)):
			}{}
		\end{mapleinput}
	\end{maplegroup}
	\begin{maplegroup}
		\begin{mapleinput}
			\mapleinline{active}{1d}{# prepare covariance
				P2:=ApplyMatrix(P,
				[xi(x,y)], Alg):
				P2:=convert(P2,list):
			}{}
		\end{mapleinput}
	\end{maplegroup}
	\begin{maplegroup}
		\begin{mapleinput}
			\mapleinline{active}{1d}{l1:=[x=x1,y=y1,
				Dx=Dx1,Dy=Dy1]:
			}{}
		\end{mapleinput}
	\end{maplegroup}
	\begin{maplegroup}
		\begin{mapleinput}
			\mapleinline{active}{1d}{l2:=[x=x2,y=y2,
				Dx=Dx2,Dy=Dy2]:
			}{}
		\end{mapleinput}
	\end{maplegroup}
	\begin{maplegroup}
		\begin{mapleinput}
			\mapleinline{active}{1d}{# construct covariance
				# apply from one side
				Kf:=convert(
				\ map(
				\ \ b->subs(
				\ \ \ [xi(x1,y1)=b[1]],
				\ \ \ subs(l1,P2)),
				\ \ convert(
				\ \ \ Kg(x1,y1,x2,y2),
				\ \ \ listlist)),
				\ Matrix):
				# apply from other side
				Kf:=convert(
				\ expand(
				\ \ map(
				\ \ \ b->subs(
				\ \ \ \ [xi(x2,y2)=b[1]],
				\ \ \ \ subs(l2,P2)),
				\ \ \ convert(
				\ \ \ \ Transpose(Kf),
				\ \ \ \ listlist))),
				\ Matrix):
			}{}
		\end{mapleinput}
	\end{maplegroup}
	\begin{maplegroup}
		\begin{mapleinput}
			\mapleinline{active}{1d}{# code for GP regression
				GP:=proc(Kf,
				\ points,yy,epsilon)
				local n,m,kf,K,s1,s2,alpha,KStar;
				\ n:=nops(points);
				\ m:=RowDimension(Kf);
				\ s1:=map(
				\ \ a->[x1=a[1],y1=a[2]],
				\ \ points);
				\ s2:=map(
				\ \ a->[x2=a[1],y2=a[2]],
				\ \ points);
				\ kf:=convert(Kf,listlist);
				\ K:=convert(
				\ \ evalf(
				\ \ \ map(
				\ \ \ \ a->map(
				\ \ \ \ \ b->convert(
				\ \ \ \ \ \ subs(a,subs(b,kf)),
				\ \ \ \ \ \ Matrix),
				\ \ \ \ \ s2),
				\ \ \ s1)),
				\ \ Matrix):
				\ alpha:=yy.(K+epsilon\symbol{94}2)\symbol{94}(-1);
				\ KStar:=map(
				\ \ a->subs(a,kf),
				\ \ s1):
				\ KStar:=subs(
				\ \ [x2=x,y2=y],KStar):
				\ KStar:=convert(
				\ \ map(op,KStar),Matrix):
				\ return alpha.KStar;
				end:
			}{}
		\end{mapleinput}
	\end{maplegroup}
	\begin{maplegroup}
		\begin{mapleinput}
			\mapleinline{active}{1d}{p:=[1/2,1/2]:
				mu_p:=Transpose(
				subs(
				\ [x=p[1],y=p[2]],
				\ mu)):
				gp:=unapply(
				\ factor(simplify(
				\ \ convert(
				\ \ \ GP(Kf,[p],<0|1>-mu_p,1e-5),
				\ \ \ list)))
				\ +convert(mu,list),
				\ (x,y));
			}{}
		\end{mapleinput}
		\mapleresult
		\begin{maplelatex}
			\begin{align*}
			( {x,y} )\mapsto &{{\rm e}^{- 0.25+ 0.5x- 0.5x^2+0.5y- 0.5y^2}}\cdot\\
			&[1+ 16x \left(  y^4(x-1)+ y^3(x-2.5)(x-1)+ y^2(0.5x+ 1-1.5x^2)- y(x-1)(x-2.33)+ (x-1)^2 \right) ,\\
		&- 16y \left( x^4(y-1)+ x^3(y-2.5)(y-1)+x^2(0.5y+ 1-1.5y^2)- x(y-1)(y-2.33)+ (y-1)^2 \right) ]
		
			\end{align*}
		\end{maplelatex}
	\end{maplegroup}
	\begin{maplegroup}
		\begin{mapleinput}
			\mapleinline{active}{1d}{}{}
		\end{mapleinput}
	\end{maplegroup}
	\begin{maplegroup}
		\begin{mapleinput}
			\mapleinline{active}{1d}{}{}
		\end{mapleinput}
	\end{maplegroup}
\end{example}

\twocolumn

\bibliographystyle{apalike}
\def\cprime{$'$} \def\cprime{$'$} \def\cprime{$'$} \def\cprime{$'$}
  \def\cprime{$'$}

\end{document}